\newcommand{\U}{\mathcal{U}\xspace}
\newcommand\M{{\cal M}}        
\newcommand{\set}[1]{\{#1 \}}
\newcommand{\?}{\raisebox{.5pt}{\textcircled{\raisebox{-.9pt} {{\small
          ?}}}}}
\newcommand{\abs}[1]{\left[ #1 \right] }
\newcommand{\Tau}{\mathcal{T}}
\newcommand{\Psym}{\mathcal{P}_\text{sym}}
\newcommand{\Ph}{\mathcal{P}_\text{h}}
\newcommand{\psym}{p_\text{sym}}
\newcommand{\ph}{p_\text{h}}
\newcommand{\Asym}{\mathcal{A}_\text{sym}}
\newcommand{\Ah}{\mathcal{A}_\text{h}}
\newcommand{\Am}{\mathcal{A}}
\newcommand{\Pm}{\mathcal{P}}
\newcommand{\Sm}{\mathcal{S}}
\newcommand{\X}{\mathcal{X}}
\newcommand{\C}{\mathcal{C}}
\newcommand{\citet}[1]{\citeauthor{#1}~\citeyear{#1}}
\newcommand{\inter}[1]{\llbracket #1 \rrbracket}
\newcommand\blfootnote[1]{%
  \begingroup
  \renewcommand\thefootnote{}\footnote{#1}%
  \addtocounter{footnote}{-1}%
  \endgroup
}
\newtheorem{definition}{Definition}
\newtheorem{theorem}{Theorem}
\newtheorem{proof}{Proof}
\newcommand\BibTeX{{\rmfamily B\kern-.05em \textsc{i\kern-.025em b}\kern-.08em
T\kern-.1667em\lower.7ex\hbox{E}\kern-.125emX}}
\begin{document}

\title{An Anytime Hierarchical Approach for \\ Stochastic Task and Motion Planning}


\author{\name Naman Shah \email namanshah@asu.edu \\
    \addr Arizona State University, \\ 699 S Mill Ave, 
    Tempe, AZ, USA, 85281 
    \AND 
    \name Siddharth Srivastava \email siddharths@asu.edu \\ 
    \addr Arizona State University, \\ 699 S Mill Ave,
    Tempe, AZ, USA, 85281 
}

\maketitle

\begin{abstract}
    In order to solve complex, long-horizon tasks, intelligent robots
    need to carry out high level, abstract planning and reasoning in
    conjunction with motion planning. However, abstract models are
    typically lossy and plans or policies computed using them can be
    inexecutable. These problems are exacerbated in 
    stochastic situations where the robot needs to reason about, and
    plan for multiple  contingencies. 
      
    We present a new approach for integrated task and
    motion planning in stochastic settings. In contrast to prior work in this
    direction, we show that our approach can effectively compute
    integrated task and motion policies whose branching structures
    encode agent behaviors that handle multiple execution-time contingencies. We
    prove that our algorithm is probabilistically complete and can
    compute feasible solution policies in an anytime fashion so that the
    probability of encountering an unresolved contingency decreases over
    time. Empirical results on a set of challenging  problems show the
    utility and scope of our method.
\end{abstract}


\blfootnote{In submission}


\section{Introduction}

A long-standing goal in robotics is to develop robots that can operate autonomously in real-world environments and solve complex tasks such as cleaning a room or organizing a table. Recent developments in sampling-based motion planning algorithms~\cite{kavraki1996probabilistic,Lavalle98rrt,sucan2012open} have enabled robots to efficiently plan in configuration spaces that have infinite states and a large branching factor. However, these sampling-based motion planners are not designed for to plan over long horizons and changing configuration spaces for solving complex real-world problems. The problem becomes even more challenging when the robot's actions and/or its environment are stochastic, as the agent has to not only deal with a long horizon but also needs to compute a contingent solution that deals with all possible situations that might arise while acting in the real world. E.g., consider a household robot that is arranging a dining table. The robot may to pick up objects or may drop objects while carrying them from one location to another. What should robot do if this happens?

\begin{figure}[t!]
  \centering
  \includegraphics[height=1.5in]{./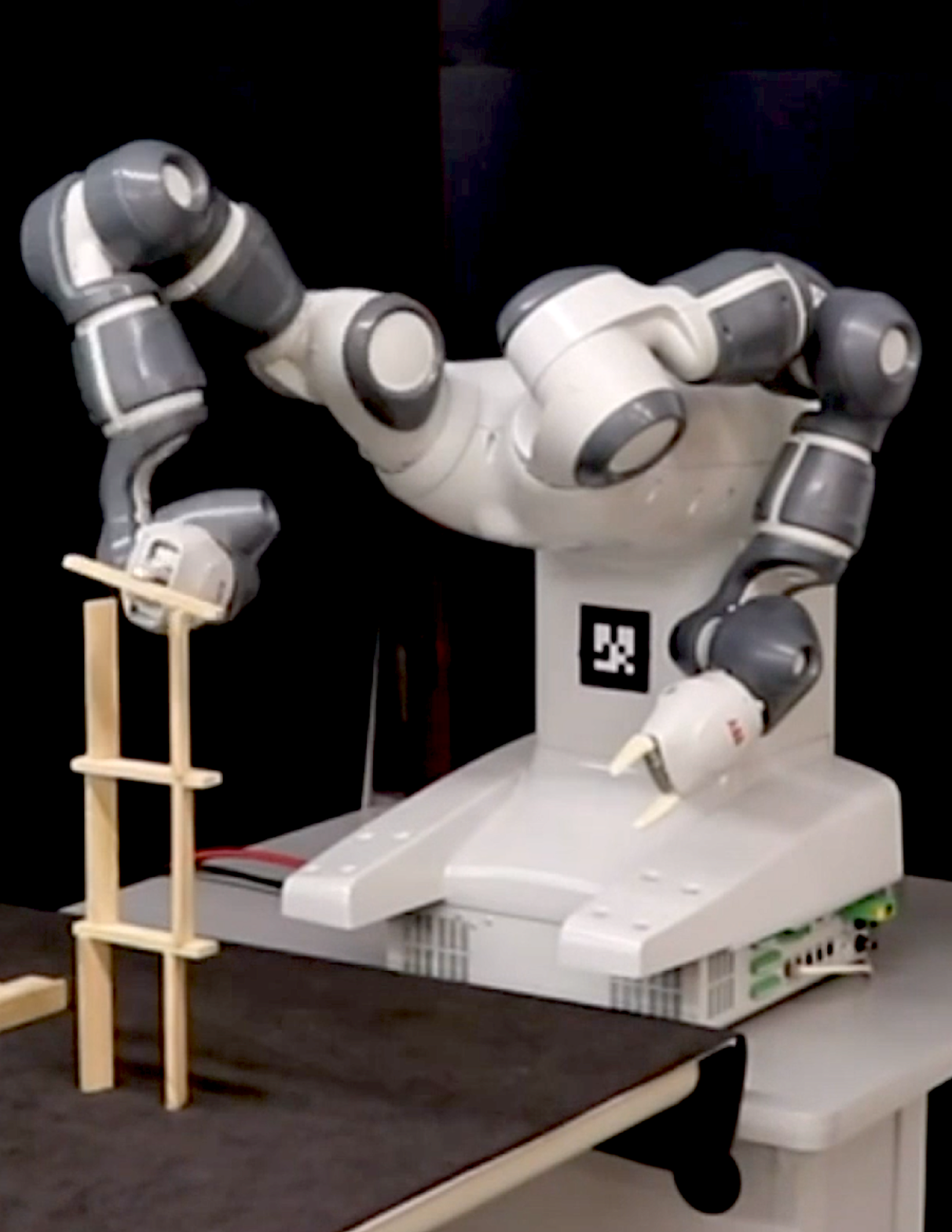}\hspace{1cm}
  \includegraphics[height=1.5in]{./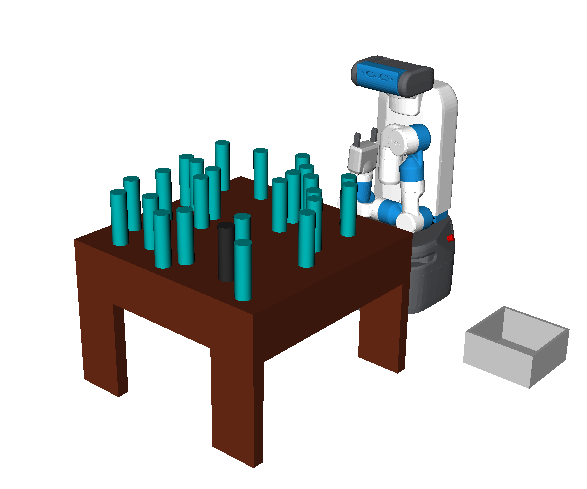}
  \caption{Left: YuMi robot uses the algorithm developed in this paper to build a $3\pi$ structure using Keva planks despite stochasticity in their initial locations. Right:
    A stochastic variant of the cluttered table domain where robot
    is instructed to pick up the black can, but pickups may fail and crush the cans requiring them to be disposed.}
  \label{fig:domainFig}
\end{figure}

A na\"ive approach for overcoming such a problem would be to first compute a symbolic high-level policy using an abstract model of the domain defined using a symbolic language such as Probabilistic Planning Domain Definition Language (PPDDL)~\cite{younes2004ppddl1} or Relational Dynamic Influence Diagram (RDDL)~\cite{sanner10_rddl} and then refining each possible scenario in the policy by computing low-level motion plans for every action in it. This approach is na\"ive in the sense that 
\begin{enumerate*}[a)] \item abstracted models are lossy and may lose important geometric information about the problem. Policies resulting from such approaches might not have any feasible motion planning refinements for some of their actions ~\cite{cambon09_asymov,kaelbling11_hierarchical,srivastava14_tmp} and 
  \item the number of actions in a policy grows exponentially with the horizon. Computing motion planning refinements for the entire policy may be expensive, and it may require large amount of time. 
\end{enumerate*}

Therefore, most solutions approaches to this problem focus on most likely scenarios. They compute task and motion plan for these most likely actions and outcomes and execute it until the robot achieves the goal or reaches a state for which it has not yet planned an action. In the latter situation, they replan from that state and compute a new plan that reaches the goal from it. While this approach may achieve the goal, it may not be safe or sufficient as on-the-fly replanning is prone to errors and may result in unwanted situations or dead ends. E.g., a vacuum cleaner robot may end up in a water puddle and damage itself if such a determinization~\cite{determinization} based approach is employed. 

To the best of our knowledge, this paper presents the first \emph{probabilistically complete} and \emph{anytime} approach for computing integrated task and motion policies for stochastic environments where each action has a discrete set of possible outcomes using off-the-shelf symbolic planners and motion planners.
Our approach computes the likelihood of each possible outcome of the policy and weighs it against the estimated cost of computing solution for that outcome and use it prioritize outcomes for refining.
The approach is anytime in the sense that it continually improves the quality of the solution while ensuring that more likely situations are resolved earlier by the approach. It also provides a running estimate of the probability mass of likely executions covered in the current policy. This estimate can be used to start execution based on the level of risk acceptable in a given application, allowing one to trade-off pre-computation time for likelihood of outcomes covered.

This way our approach generalizes the methods of computing solutions for most likely outcomes during execution~\cite{hadfield15_modular,determinization,teichteil2008rff} to the problem of integrated task and motion planning by using the anytime approaches in \emph{AI} planning~\cite{dean88_anytime,zilberstein93_anytime,dean95_anytime}. 

In contrast to the closest related work (a conference paper by the team)~\cite{shah2020anytime}, this paper includes a new, more general, and rigorous problem formulation with an algorithmic paradigm that provides guarantees of probabilistic completeness, an extensive empirical evaluation on a broader class of test domains including experiments with a new generalized SSP solver, and a thorough analysis of the results.

The rest of the paper is structured as follows: Sec.~\ref{sec:background} provides the formal background; Sec.~\ref{sec:related} discusses recent work on the topic; Sec.~\ref{sec:formal} provides our formal framework and defines the stochastic task and motion planning problem; Sec.~\ref{sec:algo} discusses our overall algorithm and Sec. \ref{sec:empirical} provides the empirical evaluation for our approach.

\section{Background} 
\label{sec:background}
We start with discussion on motion planning (Sec.~\ref{subsec:mp}). We then discuss fundamentals of first-order logical models (Sec.~\ref{subsec:first_order}) and abstractions (Sec.~\ref{subsec:abstraction}). Sec.~\ref{subsec:ssp} discusses stochastic shortest path problems in the context of our approach. 



\subsection{Motion Planning}
\label{subsec:mp}

Let $\X = \X_\text{free} \cup \X_\text{obs} $ be the configuration space~\cite{Lav06} of a given robot. Here $\X_\text{free}$ represents the set of configurations where the robot is not in collision with any obstacle and $\X_\text{obs}$ represents configurations in collision with an obstacle. Let $x_i \in \X_\text{free}$ and $x_g \in \X_\text{free}$ be the initial and goal configurations of the robot. A motion planning problem is defined as follows:
\begin{definition}

    A \textbf{motion planning problem} is a $4$-tuple $\langle \X, f, x_o, x_g \rangle$ where, 
    \begin{itemize}
        \item $\X$ is the space of all possible configurations (a.k.a \emph{configuration space or C-space}). \item $x_i$ is the initial configuration.
        \item $x_g$ is the goal configuration. 
        \item $f: \X \rightarrow \{0,1\}$ determines whether a pose $x \in \X$ is in a collision or not. $f(x) = 0$ if the configuration $x$ is in collision ($x \in \X_\text{free}$).  
    \end{itemize}
\end{definition}

A solution to a motion planning problem is a collision-free trajectory $t: [0,1] \rightarrow \X$ such that $t(0) = x_i$ and $t(1) = x_g$. A trajectory is collision-free if $f(x) = 1$ for every configuration in the trajectory.

\subsection{First-Order Logical Models}
\label{subsec:first_order}
We use first-order logic to express models for planning problems.  Let $\tau_R$ be a type of variables that represent vectors of continuous real values and $\tau_O$ be a type of variables that represent names of the objects in the environment and symbolic references for the variables of the type $\tau_R$. Let $\U$ be the universe consisting of object names of type $\tau_O$, continuous vectors of type $\tau_R$, and symbolic references for these vectors, let $\Pm$ be a set of predicates, and let $\C$ be a set of constants. Let $\mathcal{V} = \Pm \cup \C$ define the vocabulary. In this work, we consider two kinds of predicates: symbolic and hybrid.  We define each of them as follows: 
\begin{definition}
    A predicate $\psym(y_1,\dots,y_k) \in \Psym$ is a \textbf{symbolic predicate} iff all of its arguments $y_1,\dots,y_k$ are of type $\tau_O$.
\end{definition} 

\begin{definition}
    A predicate $\ph(y_1,\dots,y_k,\theta_1,\dots,\theta_m) \in \Ph$ is a \textbf{hybrid predicate} iff its arguments $y_1,\dots,y_k$ are of type $\tau_O$ and  $\theta_1,\dots,\theta_m$ are of type $\tau_R$.
\end{definition}

States are logical structures or models defined over predicates. A structure or a state $s \in \Sm$, of vocabulary $\mathcal{V}$ where $\Pm = \Psym \cup \Ph \subset \mathcal{V}$, consists of a universe $\U$, a predicate $p^S$ over $\U$ for every predicate $p \in \mathcal{V}$, and an element $c^S$ over $\mathcal{U}$ for every constant symbol $c \in \mathcal{V}$. An \emph{interpretation} of a predicate $p \in \Pm$ provides a relation between objects in the universe $\U$. 
 Henceforth, we use $ \inter{p}_s$ and $\inter{\psi}_{s}$ to denote interpretations of the predicate $p$ and a formula $\psi$ in $s \in \Sm$ respectively. 

We use the notion of actions in PPDDL~\cite{McDermott1998PDDL} to represent the actions available to the robot. We classify actions available to the robot as \emph{symbolic} and \emph{hybrid} actions depending on the types of predicates that appear in the actions' descriptions and their arguments. Symbolic actions only use predicates from $\Psym$ to specify their preconditions and effects, but hybrid actions may use predicates from $\Psym$ and $\Ph$. 


A hybrid action is a motion planning action if the robot requires to compute a motion plan while executing the hybrid action. Action arguments for motion planning actions specify trajectories required to execute these actions and preconditions can be used to specify constraints on these motion planning trajectories. Values for these motion planning arguments can be sampled using a motion planner. 
We formally define motion planning actions as follows:
\begin{definition}
A \textbf{motion planning action} $a_\text{mp}(o_1,\dots,o_k,\theta_1,\dots,\theta_j,t_1,\dots,t_n)$ is a hybrid action where $o_1,\dots,o_k$ are of type $\tau_o$, $\theta_1,\dots,\theta_j$ are of type $\tau_R$, and $t_1,\dots,t_n$ are motion planning trajectories. \emph{pre(a$_\text{mp}$)} contains constraints on $t_1,\dots,t_n$ and \emph{eff(a$_\text{mp})$} represents the effective pose of the robot after executing action $a_\text{mp}$. $\mathcal{A}_\text{mp} \subset \Ah$ is the set of all motion planning actions.
\end{definition}  

We now use these concepts of first-order logical models to discuss stochastic shortest path problems in the context of our problem.

\subsection{Abstraction}
\label{subsec:abstraction}

We use the concepts of abstraction to model the robot manipulation problem as a symbolic planning problem. 
Let $V_l$ be a low-level vocabulary and $V_h$ be a high-level vocabulary such that $V_h \subset V_l$; the predicates in $V_h$ are defined as identical to their counterparts in $V_l$. We define \emph{relational abstractions} as first-order queries that map structures over one vocabulary to structures over another vocabulary. A first-order query $\alpha$ from $V_h$ to $V_l$ defines functions in $V_h$(also identified as $\alpha(V_l))$ using the $V_l$-formulas in $S_l$: $\llbracket r \rrbracket _{V_h} (o_1, o_2, ..., o_n) = True$ iff $\llbracket \psi_{r}^{\alpha}(o_1, o_2, ..., o_n) \rrbracket _{S_l} = True$, where $\psi_{r}^{\alpha}$ is a formula over $V_l$. Such abstractions reduce the number of properties being modeled keeping number of objects the same. 
\paragraph{\textbf{Example}} Let $V_l$ be a first-order vocabulary consisting of continuous locations on a tabletop, an object $o$ and a relation \emph{atLocation($o$,$l$)} that defines relationship between an object $o$ and a continuous location $l$ on the tabletop. Let $V_h$ also be a first-order vocabulary that has a $0$-ary relation (constant) \emph{OnTable}. A first-order query $\alpha$ defines the value of \emph{OnTable} in $V_h$ as follows: \emph{OnTable} is true iff there exists a continuous location $l$ such that relation \emph{atLocation($l$,$o$)} is true in $V_l$.

The goal of our approach is to compute a solution for the obtained high-level symbolic problem with ``refinements'' that select a specific motion planning problem and its solution in the concrete space for each action in the high-level solution. In this view, each high-level action corresponds to infinite low-level problems in the concrete space, each defined by a specific initial and target configuration of the robot. For example, a high-level action of placing a cup on a table corresponds to infinite motion planning problems, each defined by a different location of the cup on the table. The refinement process would require selecting one of these problems and computing a valid motion planning solution for it. 


\subsection{Stochastic Shortest Path Problems}
\label{subsec:ssp}
We use first order logical models defined in Sec.~\ref{subsec:first_order} to define stochastic shortest path problems (SSPs)~\cite{bertsekas91_ssp} as follows: 
\begin{definition}
    A continuous \textbf{stochastic shortest path} (SSP) problem is defined as a $7$-tuple $ P_\text{ssp} = \langle \mathcal{P}, \mathcal{S}, \mathcal{A}, T, C, \gamma, H \rangle$ where,
    \begin{itemize}
        \item $\Pm = \Psym \cup \Ph$ is a set of predicates.
        \item $\Sm$ is a set of states such that every state is a structure defined over $\Pm$.
        \item $\Am = \Asym \cup \Ah$ is a set of actions. Here $\Am_\text{mp} \subset \Ah$.
        \item $T: \Sm \times \Am \times \Sm \rightarrow [0,1]$ is a transition function that assigns a probability value to each transition $(s,a,s')$. Here $s'$ represents a resultant state reached by the robot after executing an action $a$ in state $s$. 
        \item $C: \mathcal{S} \times \mathcal{A} \rightarrow \mathbb{R}$ is a cost function.
        \item $\gamma = 1$ is a discount factor (fixed). 
        \item $H$ is a finite horizon.
    \end{itemize}
\end{definition}
 A solution to an SSP is a non-stationary policy $\pi$ of the form
$\pi:\Sm \times \{1,\dots, H\} \rightarrow \Am$ that maps all
the states and time steps at which they are encountered to an
action. The optimal policy $\pi^{*} $ is a policy that reaches the
goal state with the least expected cumulative cost. Due to the finite horizon, SSP
policies need not be stationary.

\section{Related Work}
\label{sec:related}

\paragraph{\textbf{Stochastic Task Planning}} 
Many approaches have been developed for classical planning efficiently in recent years~\cite{blum1997fast,bonet2001planning,hoffmann2001ff}.
Similarly, numerous approaches have been developed to solve stochastic shortest path problems. Dynamic
programming algorithms such as value iteration and policy iteration can be
used to compute policies for SSPs. 
 \emph{Real-time dynamic programming} (RTDP)~\cite{barto1993rtdp} generalizes \emph{Korf's Learning-Real-Time-A*} algorithm to a trial-based dynamic programming method that ignores a large part of the state-space by only expanding states encountered in trials to solve $SSPs$ faster. \emph{LAO*}~\cite{hansen2001lao} uses heuristics to expand the partial policy tree along with local value iteration to compute policies for SSPs. \emph{Labeled RTDP}~\cite{bonet2003labeled} extends \emph{RTDP} by labeling states that have converged greedy policy to reduce the number of states considered for expansion to decrease policy computation time. \cite{Muise2012ImprovedNP} use state relevance to guide the search to reduce the time to compute the policy. \cite{larach2019sspdead} provide a method that decomposes an $SSP$ into multiple smaller $SSPs$ and combines the solution to handle \emph{dead ends}.

\paragraph{\textbf{Hierarchical Planning}} Hierarchical approaches~\cite{sacerdoti1974planning,knoblock1990learning,erol1995semantics,seipp2018counterexample} use abstractions to generate different hierarchies of relaxed planning problems in order to compute a solution for a complex planning problem. State abstraction generates hierarchies by removing certain predicates (in relational domains) or variables (in factored domains) from the domain vocabulary. ABSTRIPS \cite{sacerdoti1974planning} is one of the earliest hierarchical planning approaches which assigns a rank to each literal using a predefined order and complexity of achieving that literal in the STRIPS planning process. Abstraction hierarchy is generated by dropping literals from the precondition of actions in the domain in the order specified by the rank of literals. The planning hierarchy generated using ABSTRIPS is common for all problems in the given domain and not catered to independent problems. 

ALPINE \cite{knoblock1990learning} uses \emph{ordered monotonicity} to overcome this issue by generating abstraction hierarchies tailored to each problem for the given domain.  
\cite{seipp2013counterexample,seipp2018counterexample} use counter-example guided abstraction refinement (CEGAR) to solve a complex planning problem hierarchically using Cartesian abstraction -{}- a variant of predicate abstraction. This CEGAR-based approach starts with a na\"ive abstraction for the problem and computes an optimal plan for the abstract model. It tries to execute this plan in the original model. If it fails to execute the plan successfully, it computes a flaw in the current plan and uses it to refine the current abstract model. This approach requires a pre-image of each \emph{grounded operator} and a bounded branching factor for the search tree. Such approaches are not conducive to task and motion planning setups because they require discrete action and state spaces while task and motion planning operates in continuous states and action spaces.

Temporal abstractions generate high-level actions that are compositions of multiple low-level actions. Some hierarchical planning approaches employ temporal abstraction to create relaxed problems. Multiple approaches~\cite{kambhampati1998hybrid,bacchus2000using,bercher2014hybrid} have used hierarchical task networks (HTNs)~\cite{erol1995semantics} to compute plans efficiently for complex tasks. HTNs use temporal abstractions to define tasks over primitive actions. The goal is to compute a final plan which is a composition of the high-level tasks that are achieved through the partial order planning of the primitive actions. \citet{marthi2007hla} compute hierarchical domain descriptions based on angelic semantics using temporal abstractions. They use a top-down forward search algorithm to refine the high-level actions into a sequence of primitive actions. While this approach and HTN-based approaches efficiently perform top-down planning using temporal abstraction, they fail to compute accurate plans in the models that do not fulfill downward refinement property. Additionally, they do not handle stochasticity. 

Several approaches utilize abstraction for solving MDPs (\cite{hostetler14_state,bai16_markovian,li06_abstractMDP,singh95_abstractRL}). However,
these approaches assume that the full, unabstracted MDP can be
efficiently expressed as a discrete MDP. \cite{marecki06_cmdp} consider continuous-time MDPs with finite
sets of states and actions. In contrast, our focus is on MDPs with
high-dimensional and uncountable state and action spaces. Recent work on
deep reinforcement learning (e.g.,
\cite{hausknecht16_iclr,mnih15_drl}) presents approaches for using
deep neural networks in conjunction with reinforcement learning to
solve short-horizon MDPs with continuous state spaces. These
approaches can be used as primitives in a complementary fashion with
task and motion planning algorithms, as illustrated in recent
promising work by \cite{wang18_active}.

Task planning efficiently computes solutions for complex goals. But, it can not handle manipulation problems with continuous domains that have an infinite branching factor. Though \emph{PDDL 2.1}~\cite{maria2003pddl2} allows using continuous variables, it still struggles to handle infinite branching factor.

\paragraph{\textbf{Motion Planning}} Recent research resulted in significant improvements in sampling-based motion planners. Probabilistic roadmaps (PRM)~\cite{kavraki1996probabilistic}  randomly sample from the C-space to generate a roadmap that can be lazily used to generate motion plans. Rapidly-exploring random trees (RRT)~\cite{Lavalle98rrt} computes a collision-free path from an initial robot configuration to the target configuration by connecting randomly sampled robot configurations from the C-space. Bi-directional RRT (BiRRT)~\cite{kuffner2000birrt} updates existing RRT to initiate search trees from the initial and goal configurations to boost the speed of motion planning. Constrained BiRRT (CBiRRT)~\cite{bernson2009cbirrt} extends the BiRRT technique constraining the search space by using projection techniques to explore configurations spaces and finds bridges between them. 


\paragraph{\textbf{Integrated Task and Motion Planning}}
\label{sec:itmp}

Most of the prior work in the field of integrated task and motion planning has focused on solving deterministic task and motion planning problems. Most of these approaches can be classified into three categories: \emph{1)} approaches that use symbols to guide the low-level motion planning, \emph{2)} approaches that extend high-level representations to simultaneously search high-level plans along with continuous parameters, and \emph{3)} approaches that use interleaved search for valid high-level plans with low-level refinements for its actions. Our approach falls under the last category. \citet{garrett2021integrated} present an exhaustive survey of these approaches; we discuss only the most closely related approaches here.

\paragraph{Approaches that use symbols to guide the motion planning:}
\cite{cambon09_asymov} introduced one of the earliest approaches named aSyMov. ASyMov uses symbolic knowledge to guide planning in geometric space using location references. \cite{plaku10_sampling} use a similar approach to allow combined task and motion planning for robots with constrained manipulators. Such approaches employ task planning as a heuristic for planning in the C-space, which may not always be efficient due to a lack of knowledge of geometric constraints at the task-planning level. In order to overcome this limitation, we interleave the process of computing motion plans and updating the high-level specification.

\paragraph{Approaches that extend high-level representations:} Another class of approaches ~\cite{dornhege12_semantic,garrett15_ffrob,garrett2020pddlstream} extends the high-level representation to allow the high-level planner to validate preconditions of the high-level actions in the geometric space while computing the high-level plan. \cite{dornhege12_semantic} do so by developing \emph{semantic attachments} for \emph{PDDL} representation that check the validity of each high-level action using a motion planner in the low level. \emph{FFRob}~\cite{garrett15_ffrob} uses pre-sampled robot configurations to discretize the problem and build a \emph{roadmap} to evaluate the preconditions of the high-level action. \emph{PDDLStream} \cite{garrett2020pddlstream} uses optimistic samplers to sample continuous arguments in the PDDL descriptions. Their optimistic samplers are analogous to ``generators'' used by our approach (explained later in Sec.~\ref{sec:entity}) that are used to instantiate abstract actions and serves the same purpose. Our approach and PDDLStream use these samplers to sample concrete values for symbolic abstract arguments.


 \paragraph{Approaches that perform an interleaved search: } The last group of approaches performs an interleaved search to find a high-level solution that also has valid motion planning refinements in the low level. These approaches incrementally update the high-level models using the feedback from the low level while searching for the refinements. \cite{srivastava14_tmp} implement a modular approach that uses a planner-independent interface layer to allow communication between a task planner and a motion planner. 
 \cite{dantam2018incremental} develop a constraint-based approach that incrementally adds constraints to the high-level specification of the problem discovered while trying to refine a high-level plan generated using an \emph{SMT}-based planner. Because these approaches commit to a single high-level model, it is not clear how they would be able to avoid dead ends. Additionally, all these approaches work only for deterministic problems and do not handle stochastic settings.

To the best of our knowledge, the only approaches designed to handle stochastic task and motion planning problems were presented by \cite{kaelbling2011hierarchical}, \cite{hadfield15_modular}, and \citet{garrett2020online}. These approaches consider a partially observable formulation of the problem. \citet{kaelbling2011hierarchical} utilize regression modules on belief fluents to develop a regression-based solution algorithm. 
\citet{hadfield15_modular} extend the work on deterministic task and motion planning by \citet{srivastava14_tmp} for partially observable settings. They use maximum likelihood observations~\cite{platt2010belief} to obtain a determinized high-level representation. 
\citet{garrett2020online} develop an online algorithm that uses observational actions to gather belief about partially-observable environments and performs task and motion planning using discretized actions. These approaches address a more general class of partially observable problems. However, they do not address the computation of branching policies, which is the key focus of this paper.


\section{Formal Framework}
\label{sec:formal}




\subsection{Stochastic Task and Motion Planning Problem}
\label{sec:definition}

The main contribution of the paper is a probabilistically complete approach that computes task and motion policies for stochastic task and motion planning problems. We define the stochastic task and motion planning (STAMP) problem  as follows: 
\begin{definition}
    A stochastic task and motion planning (STAMP) problem  is defined as triplet $\langle \M, \alpha, \abs{\M}   \rangle$ where $\M$ is a low-level continuous stochastic shortest path (SSP) problem with $|\Am_\text{mp}| > 0$, $\alpha$ is an abstraction function, and $\abs{\M}$ is an abstract stochastic shortest path problem computed by applying the abstraction function $\alpha$ on the low-level SSP problem $\M$. 
\end{definition}

A solution for a stochastic task and motion planning (STAMP) problem  is a policy with actions from the concrete model $\M$. In this work, we consider solutions in the form of  a policy tree where each node $u_p$ in the tree represents a  state $s_{u_p}$ and an edge $e_p$ represents an action  $a_{e_p}$. The child of a node-edge pair $(u_p,e_p)$ in the policy tree refers to a possible outcome of executing the action $a_{e_p}$ at the state $s_{u_p}$. In the case of all deterministic actions, the tree would have a single branch. Now, we define the specific  \emph{entity abstraction} that we use to define the STAMP problem.

\subsection{Entity Abstraction}
\label{sec:entity}
In this paper, we use entity abstraction to define a stochastic task and motion planning problem. We define entity abstraction by extending the notion of abstractions introduced in Sec.~\ref{subsec:abstraction} as follows: Let $\mathcal{U}_l\;(\mathcal{U}_h)$ be the universe of $V_l\;(V_h)$ such that $|\mathcal{U}_h| \leq |\mathcal{U}_l|$. Let $\rho \; : \; \mathcal{U}_h \rightarrow 2^{\mathcal{U}_l}$ be a collection function that maps elements in $\mathcal{U}_h$ to the collection of $\mathcal{U}_l$ elements that they represent, e.g., $\rho(Table) = \{ loc \; : \; \land_{i}\; loc \cdot BoundaryVector_i < 0 \}$. Here $\rho$ binds Table $\in \mathcal{U}_h$ to a set of locations in $\mathcal{U}_l$ that are bounded by some polygonal boundary. Here $\mathcal{U}_l$ and $\mathcal{V}_l$ are low-level concrete universe and vocabulary and $\mathcal{U}_h$ and $\mathcal{V}_h$ are their abstract counterparts.

We define \emph{entity abstraction} $\alpha_\rho$ using the collection function $\rho$ as $\llbracket r \rrbracket _{\alpha_\rho(V_l)}(\tilde{o_1},\dots,\tilde{o_n}) = True $ iff $\exists\,o_1,\dots,o_n$ such that $o_i \in \rho(\tilde{o_i})$ and $\llbracket \psi_{r}^{\alpha_{\rho}}(o_1,\dots,o_n) \rrbracket _{S_l} = True$. We omit the subscript $\rho$ when it is clear from the context. Entity abstractions define the truth values of predicates over abstracted entities as disjunction of the corresponding concrete predicate instantiations. E.g., an object is in the abstract region ``kitchen'' if it is at one of the any locations in that region and an object is on ``table'' if it is at any location on the table-top. Such abstractions have been used for efficient generalized planning~\cite{srivastava2008AAAI} as well as answer set programming~\cite{zeynep18_aspocp}. These type of abstractions introduce terms that may not be identifiable at high level which makes these abstractions lossy and high-level models obtained by these abstractions inaccurate. E.g., the exact location of the table, the trajectory used to reach a configuration from current configuration.

Now, we use entity abstraction to define an abstract hybrid predicate for each hybrid predicate in our vocabulary by replacing each continuous argument in the hybrid predicate with its symbolic reference. E.g., \emph{$\abs{at}$($o_1$,$\overline{loc}$)} is an abstract hybrid predicate corresponding to a hybrid predicate \emph{at($o_1$,{loc})} where, \emph{$\overline{loc} \in \mathcal{U}$} is a symbolic reference of type $\tau_O$ for the continuous vector \emph{loc}. 

To formally define an abstract hybrid predicate, let $\alpha$ be a composition of entity abstraction and function abstraction. The abstract version of a concrete predicate $p_h$ is denoted as $\abs{p_h}_{\alpha}$. We omit the subscript $\alpha$ when it is clear from the context. We define $\abs{p_h}$ as follows:  
\begin{definition}
    A predicate $\abs{p_h}_\alpha(y_1,\dots,y_k,\bar{\theta}_1,\dots,\bar{\theta}_m)$ is an \textbf{abstract hybrid predicate} corresponding to a concrete hybrid predicate $p_h(y_1,\dots,y_k,\theta_1,\dots,\theta_m)$ iff all of its arguments $y_1,\dots,y_k,\bar{\theta}_1,\dots,\bar{\theta}_m$ are variables of type $\tau_O$ and $\forall\,\bar{\theta}_i \in \emph{arg}(\abs{p_h}_{\alpha})\, \theta_i \in \rho(\overline{\theta_i})$. $\abs{\mathcal{P}_h}_{\alpha}$ is a set of all abstract hybrid predicates.  
\end{definition}

We also define an abstract hybrid action for each hybrid action in the model using the abstraction $\alpha$. The abstraction $\alpha$ replaces each action argument of type $\tau_R$ with its symbolic reference of type $\tau_O$ and each concrete hybrid predicate in its precondition and effect with its abstract counterpart. Finally, we use these concepts to define an abstract SSP as follows:

\begin{definition}
    Given a concrete planning problem $\mathcal{M}$, an \textbf{abstract planning} problem $\abs{\mathcal{M}} = \langle \mathcal{O}, \abs{\mathcal{P}}, \abs{\mathcal{\Sm}}, \abs{\mathcal{A}}, T, C \abs{s_0}, \abs{S_g}, \gamma, H \rangle$, where, 

    \begin{itemize}
        \item $O$ is a set of names for the objects in the environment and symbolic references for entities in the environment, 
        \item $\abs{\mathcal{P}} = \mathcal{P}_{sym} \cup \abs{\mathcal{P}_h}$ is a set of abstract predicates,
        \item $\abs{\Sm}$ is a set of abstract states,
        \item $\abs{\mathcal{A}} = \mathcal{A}_{sym} \cup \abs{\mathcal{A}_{h}}$ is a set of abstract actions available to the robot, 
        \item $T: \abs{\Sm} \times \abs{\Am} \times \abs{\Sm} \rightarrow [0,1]$ is a transition function,
        \item $C: \abs{\Sm} \times \abs{\mathcal{A}} \rightarrow \mathbb{R}$ is a cost function,
        \item $\abs{s_0} \in \abs{\Sm}$ is the initial state,
        \item $\abs{S_g} \subset \abs{\Sm}$ is the set of goal states,
        \item $\gamma = 1$ is a discount factor (fixed),
        \item $H$ is a fixed horizon. 
    \end{itemize}

\end{definition}

A solution to an abstract planning problem is a valid sequence of actions $ \abs{\pi}_{\alpha} = \langle \abs{a_0},\dots, \abs{a_n} \rangle$ such that each action in $\abs{\pi}$, when applied sequentially from the initial state $\abs{s_0}$, the system reaches one of the goal states in $\abs{S_g}$.

\begin{figure}[t!]
      \noindent \emph{Place($obj_1$, $config_1$, $config_2$, $target\_pose$, $traj_1$)} \\
      \begin{tabular}{rl}
          \emph{precon} & \emph{RobotAt($config_1$)} , \emph{holding($obj_1$)}, \\ 
                          & \emph{IsValidMP($traj_1$, $config_1$, $config_2$)}, \\
                          & \emph{IsCollisionFree($traj_1$)}, \\ 
                          & \emph{IsPlacementConfig($obj_1$,$config_2$,$target\_pose$)} \\ \\ 
          \emph{Concrete} & $\lnot$\emph{holding($obj_1$)}, \\
          \emph{effect}  & $\forall$ \emph{traj intersects(vol(obj, target\_pose))}, \\
                          & \emph{sweptVol(robot,traj)} $\rightarrow$ \emph{Collision($obj_1$,traj)}, \\
                          & \emph{RobotAt($config_2$)}, \emph{at($obj_1$,target\_pose)}  \\ \\ 
          \emph{Abstract} & $\lnot$ \emph{holding($obj_1$)}, \\
          \emph{effect}   & $\forall \; traj$ \? \emph{Collision($obj_1$,$traj_1$)}, \\
                          & $\lnot$\emph{RobotAt($config_1$)}, \emph{RobotAt($config_2$)}, \\
                          & \emph{at($obj_1$,target\_pose)} \\ 
      \end{tabular}
  \caption{Specification of concrete (above) and abstract(below) effects of a one-handed robot's action for placing an object}
  \label[fig]{abs_example1}
\end{figure}

Concretization operation is performed by replacing abstract symbolic references with concrete objects from their low-level domains. For instance, let $\Sm_h$ be the set of abstract states generated when an abstraction
$\alpha$ is applied on a set of concrete states $\Sm_l$. For any
$s_h \in \Sm_h$, the \emph{concretization function}
$\Gamma_\alpha(s_h) = \set{s_l \in \Sm_l: \alpha(s_l)=s_h}$ denotes the set of
concrete states represented by the abstract state $s$. Similarly, abstract hybrid actions are refined by grounding abstract entities using values from their low-level domains. But, generating the complete concretization of an
abstract state can be computationally intractable, especially in cases
where the concrete state space is continuous. In such situations, the
concretization operation can be implemented as a \emph{generator} that
incrementally samples elements from an abstract argument's concrete
domain. A generator can also be designed in way that it validates the generated values while generating them and only yield valid instantiations for the symbolic arguments.

\paragraph{\textbf{Example}}  Consider the specification of a robot's action of placing an item as a part of an SSP. In practice, low-level accurate models of such actions may be expressed as generative models or simulators. Fig. \ref{abs_example1} helps to identify the nature of abstract representations needed for expressing such actions. For readability, we use a convention where preconditions are comma-separated conjunctive lists and universal quantifiers represent conjunctions over the quantified variables. 

Fig. \ref{abs_example1} shows the specification of an action that places an object at the specified pose. Concrete description of the action requires action arguments representing object to be placed (\emph{obj}$_1$), the initial and final configuration of the robot (\emph{config}$_1$, \emph{config}$_2$), target pose for the object (\emph{target\_pose}), and the motion trajectory that takes the robot from its initial configuration to final configuration (\emph{traj}$_1$). Here \emph{obj$_1$} is an argument of type $\tau_O$ and \emph{config}$_1$, \emph{config}$_2$, \emph{target\_pose}, and \emph{traj}$_1$ are continuous 0arguments of type $\tau_R$. The abstract counterpart of this action is computed by replacing the continuous arguments of type $\tau_R$ in the concrete version with symbolic arguments representing abstract entities as mentioned earlier. E.g., \emph{target\_pose} in the abstract specification is a symbolic reference for all valid target poses for the object and \emph{traj}$_1$ is a reference for all valid motion trajectories that take the robot from \emph{config}$_1$ to \emph{config}$_2$. Values of these arguments can not be determined precisely in the abstracted space and thus a subset of preconditions and effects can not be evaluated while planning with abstract models.  E.g., it is not possible to determine whether a trajectory is collision-free as part of the precondition. Similarly, it is also not possible to determine what trajectories will be in a collision when an object is placed at a certain pose in the abstract model. Such predicates are annotated in the set of effects with the symbol \?. While computing abstractions in such a way loses important information, the abstract model is still sound~\cite{srivastava14_tmp,srivastava2016metaphysics}. 

\begin{figure}[t!]
  \begin{center}
    \includegraphics[width=0.5\columnwidth]{./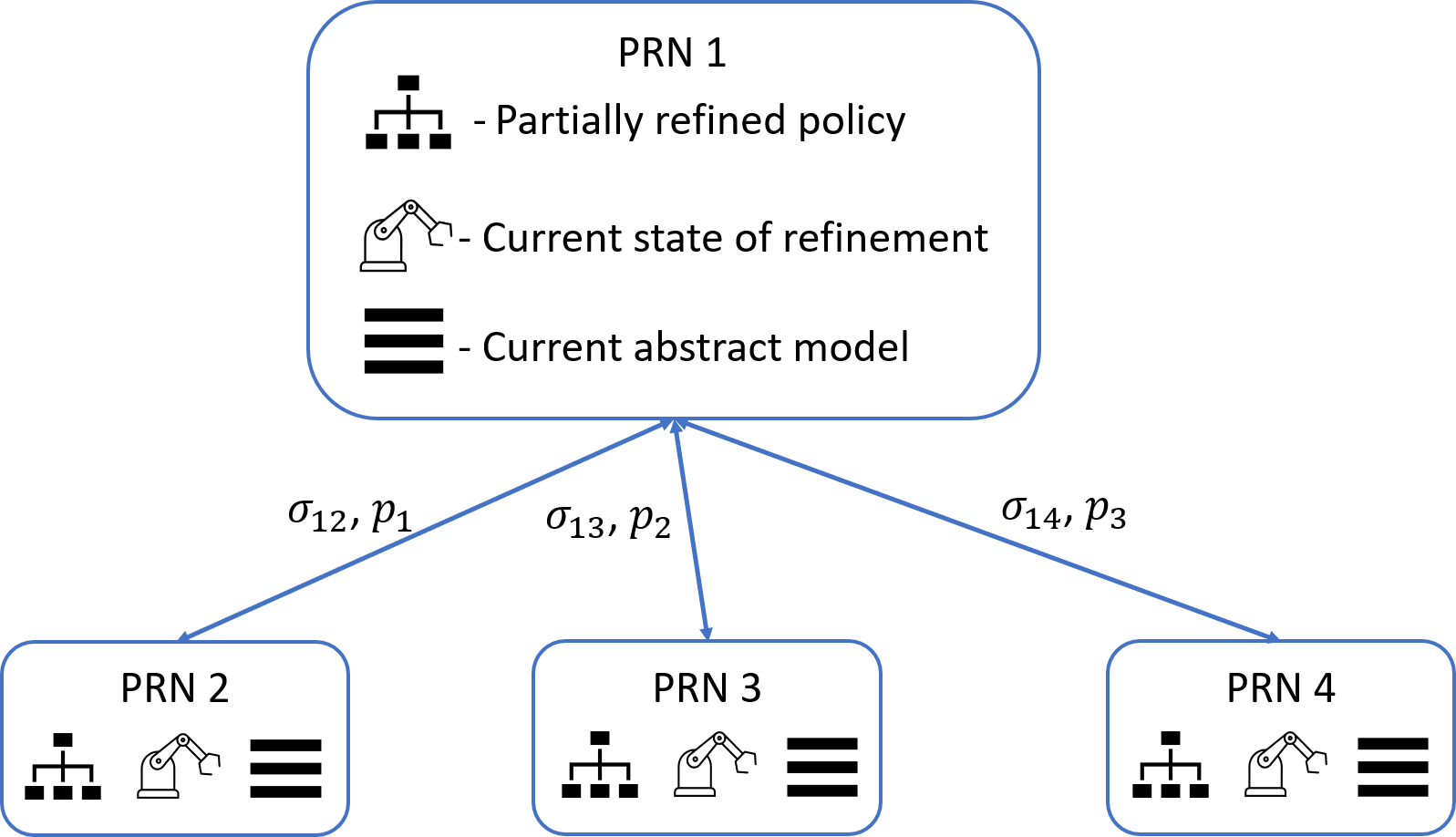}
  \end{center}
  \caption{Plan refinement graph (PRG) used to maintain
separate abstract models. Each plan refinement node (PRN) contains an abstract
model, partially refined policy, and current state of refinement. Each edge contains refinement for a partial policy ($\sigma_{ij}$) and a failure reason ($p_k$). }
  \label{fig:prg}
\end{figure}

\begin{algorithm}[t!]
    \KwIn{model $\M$, abstraction function $\alpha$, concretization function $\gamma$,  abstract model $\abs{\M}_{\alpha}$, symbolic planner $P$}
    \KwOut{anytime, contingent policy that is executable in $\M$ }
    Initialize PRG with a node with an abstract policy $\abs{\pi}$ for $\mathcal{G}$
    computed using \emph{P}\;
    \While{solution of desired quality not found}
    {
      $u$ $\gets$ GetPRNode()\;
      $\abs{\M}_u$ $\gets$ GetAbstractModel($u$)\;
      $\abs{\pi}_u$ $\gets$ GetAbstractPolicy($\abs{\M}_u$, $\mathcal{G}$, $P$, $u$)\;
      Choice $\gets$ NDChoice\{\emph{RefinePolicy}, \emph{RefineAbstraction}\}\;
      \If{Choice = \emph{RefinePolicy}}{
        \While{$\abs{\pi}_u$ has an unrefined RTL path and resource limit is not
          reached}{
            $path$ $\gets$ GetUnrefinedRTLPath($\abs{\pi}_u$)\;
          \If{explore\tcp{non-deterministic}}{
            replace a suffix of refined partial $path$ with a random action\;
          }
          Search for a feasible concretization of $path$\;
        }
      }
      \If{Choice = \emph{RefineAbstraction}}{
        $path$ $\gets$ GetUnrefinedRTLPath($\abs{\pi}_u$)\;
        $\sigma \gets$ ConcretizeFirstUnrefinedAction($path$)\;
        failure\_reason $\gets$ GetFailedPrecondition($\sigma$
        )\;
        $\abs{\M'}$ $\gets$ UpdateAbstraction($\abs{\M}$, failure\_reason) \;
        $\abs{\pi'}$ $\gets$ merge($\abs{\pi}$, GetAbstractPolicy($\abs{\M'}$, $\mathcal{G}$, solver))\;
        generate\_new\_pr\_node($\abs{\pi'}$, $\abs{\M'}$)\;
      }
      recompute  $p/c$ ratio for unrefined RTL paths\;
    }
\caption{\small HPlan Algorithm}
\label{alg:atam}
\end{algorithm}

Refining (instantiating) the abstract \emph{place} action sampling concrete values for each symbolic abstract entity in its arguments from their low-level domain. E.g., refining the symbolic entity \emph{target\_pose} would require using a generative model such as a simulator to sample a valid pose for the object being placed and computing a valid motion plan that takes the robot from its current configuration to a configuration that places the object at the sampled pose. This can be implemented using a backtracking search that tries to instantiate abstract entities in a sequential order while evaluating concrete preconditions for the instantiations.

\section{Computing Task and Motion Policies}
\label{sec:algo}

\subsection{HPlan Algorithm}
\label{sec:atm}
We extend the idea of planning with abstractions briefly discussed by \cite{srivastava2016metaphysics} to perform task and motion planning in stochastic environments using abstraction hierarchies. The goal is to find a valid high-level policy that also has valid low-level refinements for each of its actions. We propose the HPlan algorithm (Alg. \ref{alg:atam}) that performs hierarchical planning with arbitrary abstraction and concretization function.


HPlan (Alg.~\ref{alg:atam}) uses a policy refinement graph (PRG) to keep track of different abstract models and their corresponding policies. As shown in Fig.~\ref{fig:prg}, each node $u$ in a PRG contains an abstract model $\abs{\M}_{u}$, an abstract policy $\abs{\pi}_u$, and the current state of refinement for each action $\abs{a_j} \in \abs{\pi}_u$. An edge $(u,v)$ in a PRG from a node $u$ to a node $v$ consists of a partial refinement of the policy ($\sigma_{uv}$) and a failed precondition of the first action from $\abs{\pi}_u$ that does not have a valid motion planning refinement. Our approach combines two processes: $1)$ Concretizing the abstract policy, and $2)$ refining the abstract model.

HPlan (Alg.~\ref{alg:atam}) performs the above-mentioned two steps in an interleaved manner. The algorithm starts by 
initializing the PRG with a node containing this abstract model $\abs{\M}$, and an abstract policy $\abs{\pi}$ computed using an off-the-shelf symbolic solver that achieves the goal $\mathcal{G}$ (line $1$). Each iteration of the main loop (line $2$) selects a policy refinement node (PRN) $u$ from the PRG using a defined strategy (line $3$). 
 Arbitrary strategies can be used to make this selection. HPlan uses an off-the-shelf task planner to compute a high-level policy for the current abstract model if the selected PRN does not already have a high-level policy (line $5$). Once a policy is computed (or obtained), HPlan non-deterministically decides (line $6$) to either refine the high-level policy in the selected PRN by instantiating abstract arguments  of actions in the policy (lines $7$-$13$) or to update the high-level abstractions to compute accurate high-level policies (lines $14$-$20$). The algorithm carries out these interleaved steps in as follows: 

\paragraph{\textbf{a) Concretizing the Abstract Policy}}  
Lines \emph{8-13} search for a valid concretization (refinement) of the high-level policy selected/computed on line $5$ by concretizing the abstract actions with actions from the concrete domain $\mathcal{M}$ using the concretization function $\Tau_{\alpha}$ as explained in Sec.~\ref{sec:entity}. To refine a high-level policy, a root-to-leaf (RTL) path is selected that has at least one action that has not been refined. Each unrefined action is concretized using a local backtracking search (line $13$)~\cite{srivastava14_tmp}. A concretization $c_0, a_1, c_1, \ldots, a_k, c_k$ is a valid concretization of an RTL path $\abs{s_0}, \abs{a_1}, \abs{s_1}, \ldots, \abs{a_k}, \abs{s_k}$ is valid iff $c_{i+1} \in a_{i+1}(c_i)$ and $c_i \models precon(a_i+1)$ for $i = 0, \ldots, k -1$. A policy is refined when concretization for each action in every RTL path in the policy is computed. However, due to lossy nature of the abstraction, it may be possible that no valid concretization exists for the policy $\abs{\pi}_u$. For example, consider an abstraction which drops \emph{InCollision} predicate that checks whether a trajectory is in collision with some object or not from an action that places an object at a desired pose. Such high-level actions would not have any valid concretization if all the trajectories are being obstructed by some object in the low level.

\paragraph{\textbf{b) Refining the Abstract Model}} 
Lines \emph{15-20} fix a concretization for the partially refined policy selected on line $5$ and identify the earliest abstract state in the selected policy whose subsequent action's concretization is infeasible. The abstract model is refined by adding the true form of the violated precondition at the low level. Continuing the same example, if all the trajectories from the current state to the state that has the object at the desired pose are in a collision with some other object $obj_x$, the concrete precondition \emph{InCollision(traj, $obj_x$)} is violated at the concrete level and is added to the current abstract model. The rest of the policy after this abstract state is discarded.  Lines \emph{19-20} use the new model to compute a new policy. The symbolic planner is invoked to compute a new policy from the updated state; its solution policy is unrolled as a tree of bounded depth and appended to the partially refined path. This allows the time horizon of the policy to be increased dynamically.

   \begin{theorem}
    \label{thm:pc}
    If there exists a proper policy that reaches the goal within
    horizon $h$ -{}- i.e. the probability of reaching the goal is $1.0$ -{}- and has feasible
    low-level concretization for each of its actions,  and measure of these refinements under the probability density of the generators is non-zero, then Alg.~\ref{alg:atam} will find it with
    probability $1.0$ in the limit of infinite samples.
  \end{theorem}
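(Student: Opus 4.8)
The plan is to separate the two sources of progress that Alg.~\ref{alg:atam} interleaves — refinement of the abstract model and stochastic concretization of a fixed abstract policy — and to show that a fair interleaving of the two succeeds almost surely. First I would fix a witness: let $\pi^{*}$ be a proper policy of horizon at most $h$ whose every action admits a feasible concretization, and let $\delta > 0$ be a lower bound on the generator-measure of the valid-refinement set of each such action. The non-zero-measure hypothesis guarantees such a $\delta$ exists once we take the minimum over the finitely many action instances appearing in $\pi^{*}$ (finiteness follows from the bounded horizon and bounded branching of the policy tree, so the total number $N$ of action instances across all root-to-leaf paths is finite).

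Second, I would establish soundness and finiteness of the abstraction-refinement subprocess. The \emph{RefineAbstraction} branch only adds the true form of a concrete precondition that was actually violated, so every abstract model produced along the way remains sound in the sense of Sec.~\ref{sec:entity}; in particular the abstract policy space always contains $\alpha(\pi^{*})$, and no refinement ever rules the witness out. Because the concrete vocabulary is finite, only finitely many distinct predicates can ever be added, so the PRG contains only finitely many reachable abstract models up to equivalence. Under a fair selection strategy — every PRN and every non-deterministic choice revisited infinitely often, which the anytime main loop supplies in the infinite-sample limit — the refinement chain must eventually reach an adequate model $\abs{\M}^{\star}$ for which the symbolic planner returns an abstract policy that is downward-refinable, with its dynamically grown horizon having reached $h$. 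This is where I expect the main difficulty: I have to rule out stalling on a model whose abstract policy looks feasible at the abstract level yet has no concretization, by arguing that each such failure triggers the addition of the violated precondition and thereby strictly advances the finite chain, and I have to show the random action-replacement (\emph{explore}) step supplies any missing policy structure with positive probability so that the correct branching structure is eventually proposed.

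Finally, conditioned on operating at $\abs{\M}^{\star}$ with a refinable abstract policy in hand, I would close the argument with a Borel--Cantelli estimate. A single pass of \emph{RefinePolicy} concretizes each of the $N$ action instances by independent generator draws, so one full concretization attempt succeeds with probability at least $\delta^{N} > 0$. Since fairness supplies infinitely many independent such attempts, the probability that all of them fail is $\lim_{k \to \infty}(1 - \delta^{N})^{k} = 0$, and hence Alg.~\ref{alg:atam} produces a fully refined proper policy with probability $1.0$. The remaining bookkeeping — that the $p/c$-ratio recomputation and node selection never permanently starve the adequate node, and that the horizon-growth mechanism terminates at $h$ — is routine given the fairness established above and completes the proof.
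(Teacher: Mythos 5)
There is a genuine gap, and it sits exactly where you flag ``the main difficulty'': you never establish that the algorithm eventually \emph{proposes} the witness policy's branching structure, and your fallback---that the refinement chain converges to an adequate model $\abs{\M}^{\star}$ at which ``the symbolic planner returns an abstract policy that is downward-refinable''---does not hold. Nothing forces the symbolic planner, even on a fully refined abstract model, to output $\alpha(\pi^{*})$ rather than some other abstract policy with no feasible concretization; soundness of the abstraction only guarantees that $\alpha(\pi^{*})$ remains \emph{in} the policy space, not that the planner selects it. The paper's proof does not route through convergence of the abstraction-refinement chain at all. Instead it makes the \emph{explore} randomization (lines 10--12) the engine of the argument: it defines a progress measure $k$, the minimum depth up to which the current policy $\pi_i$ matches the witness $\pi_p$, and argues that whenever the algorithm reaches the first mismatched action at depth $k+1$, the random action replacement selects the correct action with positive probability (positive because the abstract action set is finite and the horizon is fixed). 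This yields, in expected bounded time, a policy matching $\pi_p$ to depth $k+1$, and by induction to depth $h$; only then does the non-zero-measure hypothesis on the generators close the argument, as in your final step. You mention the explore step only as something you ``have to show'' works---that is the theorem's central claim, not bookkeeping, and leaving it unproven leaves the proof incomplete.

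Two secondary points. First, your Borel--Cantelli bound $\delta^{N}$ treats the $N$ action concretizations as independent draws, but successive concretizations along a root-to-leaf path are constrained to be jointly consistent ($c_{i+1}\in a_{i+1}(c_i)$ and $c_i\models precon(a_{i+1})$), so a per-action lower bound $\delta$ does not immediately multiply; the hypothesis you should invoke is the one the theorem actually states, namely that the feasible refinements \emph{of the whole policy} have non-zero measure under the generators' density. Second, your observations about soundness and finiteness of the abstraction-refinement chain are correct and in fact more careful than anything in the paper's proof, but they are not sufficient on their own, for the reason above.
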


  \begin{proof}
    Let $\pi_p$ be the proper policy that achieves the goal with horizon $h$ and has valid low-level concretization for each of its actions. Consider a policy $\pi_i$ inside a PRN $i$ at an intermediate step of Alg.~\ref{alg:atam}; let $k$ denote the minimum depth up to which $\pi_p$ and $\pi_i$ match. Here, $k$ denotes a measure of correctness. When PRN $i$ is selected for refinement, eventually Alg.~\ref{alg:atam} would try to compute low-level concretization for an action at depth $k+1$ that does not match with the proper policy $\pi_p$. In this case, there is a chance that Alg.~\ref{alg:atam} would select the correct action (that matches with $\pi_p$ at depth $k+1$) under the \emph{explore} condition (lines $10$-$12$) of Alg.~\ref{alg:atam} and then generates a plan that reaches the goal state. Finite number of discrete actions in the abstract model and the fixed horizon ensures that in time bounded in expectation, HPLan will generate a policy with the measure of correctness $k+1$ and eventually with the measure of correctness $h$. Once the algorithm finds the policy with the measure of correctness $h$, it stores it in the PRG and is guaranteed to find feasible refinements with probability one if the measure of these refinements under the probability-density of the generators is non-zero.
  \end{proof}
  

  \begin{figure}[t!]
    \begin{center}
      \includegraphics[width=0.6\columnwidth]{./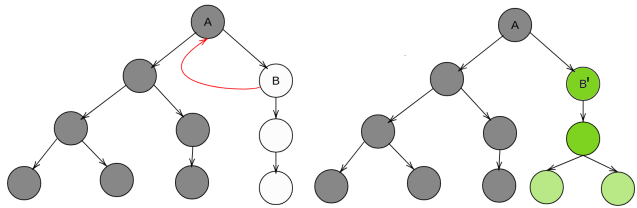}
    \end{center}
    \caption{Left: Backtracking from node $B$ invalidates the concretization of subtree rooted at $A$. Right: Replanning from node $B$}
    \label{fig:ptree}
  \end{figure}

\subsection{HPlan for STAMP}
We enhance the basic Alg.~\ref{alg:atam} in two primary directions to facilitate STAMP problems. These optimizations allow Alg.~\ref{alg:atam} to compute anytime solutions for STAMP problems and improve the search of concretization of abstract policies.

\paragraph{\textbf{Search for Concretizations}} 
Sampling-based backtracking search performed by Alg.~\ref{alg:atam} (line $13$) to concretize the abstract actions suffers from a few limitations in stochastic settings that are not present in the deterministic settings. Fig.~\ref{fig:ptree} illustrates the problem. The gray nodes in the image show the actions which are concretized. White nodes represent actions that are yet to be concretized. Sibling nodes represent the non-deterministic action outcomes. Now, if the action in the node $B$ does not accept any valid concretization, backtracking to node $A$ and changing its action's concretization would invalidate concretization for the entire subtree rooted at node $A$. Alg.~\ref{alg:atam} handles such scenarios by non-deterministically selecting whether to perform backtracking searching or not (line $6$) and by maintaining different abstract models through \emph{PRG} and employing a resource limit (line $8$) to explore them simultaneously.

\paragraph{\textbf{Anytime Computation for Task and Motion Policies}} 
The main computational challenge for Alg.~\ref{alg:atam} in stochastic settings is that the number
of root-to-leaf (RTL) branches grows exponentially with the time
horizon and the number of contingencies in the domain. In most scenarios, not all contingencies are equally probable. Each RTL path has a certain
probability of being encountered; refining it incurs a computational
cost. Waiting for a complete refinement of the policy tree results in wasting a lot of
time as most of the situations have a very low probability of being encountered.  The optimal selection of the paths to refine within a fixed
computational budget can be reduced to the knapsack
problem. Unfortunately, we do not know the precise
computational costs required to refine an RTL path. However, we can approximate this cost depending on the number of actions in an RTL path and the size of the domains of the arguments of those actions. Furthermore, the knapsack problem is NP-hard. However, we can compute provably good approximate solutions to this problem using a greedy approach: we prioritize the selection of a path to refine based on the probability
of encountering that path \emph{p} and the estimated cost of
refining that path \emph{c}. We compute $p/c$ ratio for all the paths and select the unrefined path with the largest ratio for refinement (line $9$ and $15$). $p/c$ ratio for each path is updated after each iteration of the main loop (line $21$). Intuitively, our approach works as follows:

\begin{figure*}[t]
  \centering
  \includegraphics[width=0.8\textwidth]{./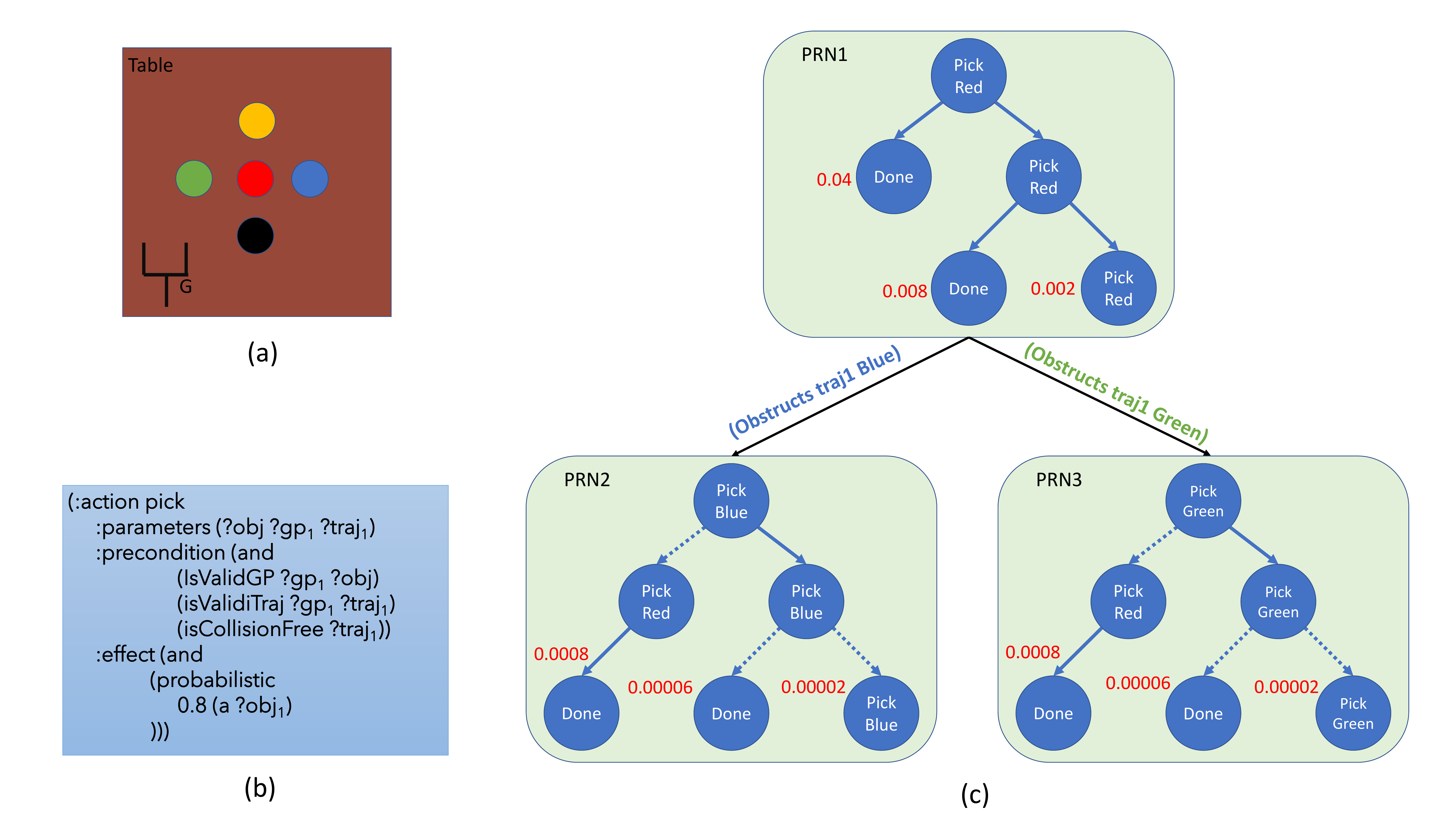}
  \caption{A working example for Alg.~\ref{alg:atam}. (a) shows initial environment configuration. Goal for the robot is to pick up the ``Red'' object which is surrounded by ``Blue'', ``Green'', ``Orange'', and ``Black'' objects. G is the end-effector of a robot. (b) shows a high-level, abstract task specification of the ``pick'' action. (c) shows the policy refinement graph (PRG) which is generated incrementally by Alg.~\ref{alg:atam}. Each green box represents a policy refinement node (PRN). Tree in each PRN represents a high-level policy. Each node in a high-level policy is a state-action pair. For brevity, we only show high-level action in the node. Trees with dotted lines are partial policies. Red number represents $p/c$ ratio for each RTL path in a policy. }
  \label{fig:working_example}
  
\end{figure*}

\begin{figure*}[t!]
  \begin{subfigure}{\textwidth}
    \centering
    \vspace{1em}
      \includegraphics[width=1.5in,height=1.5in]{./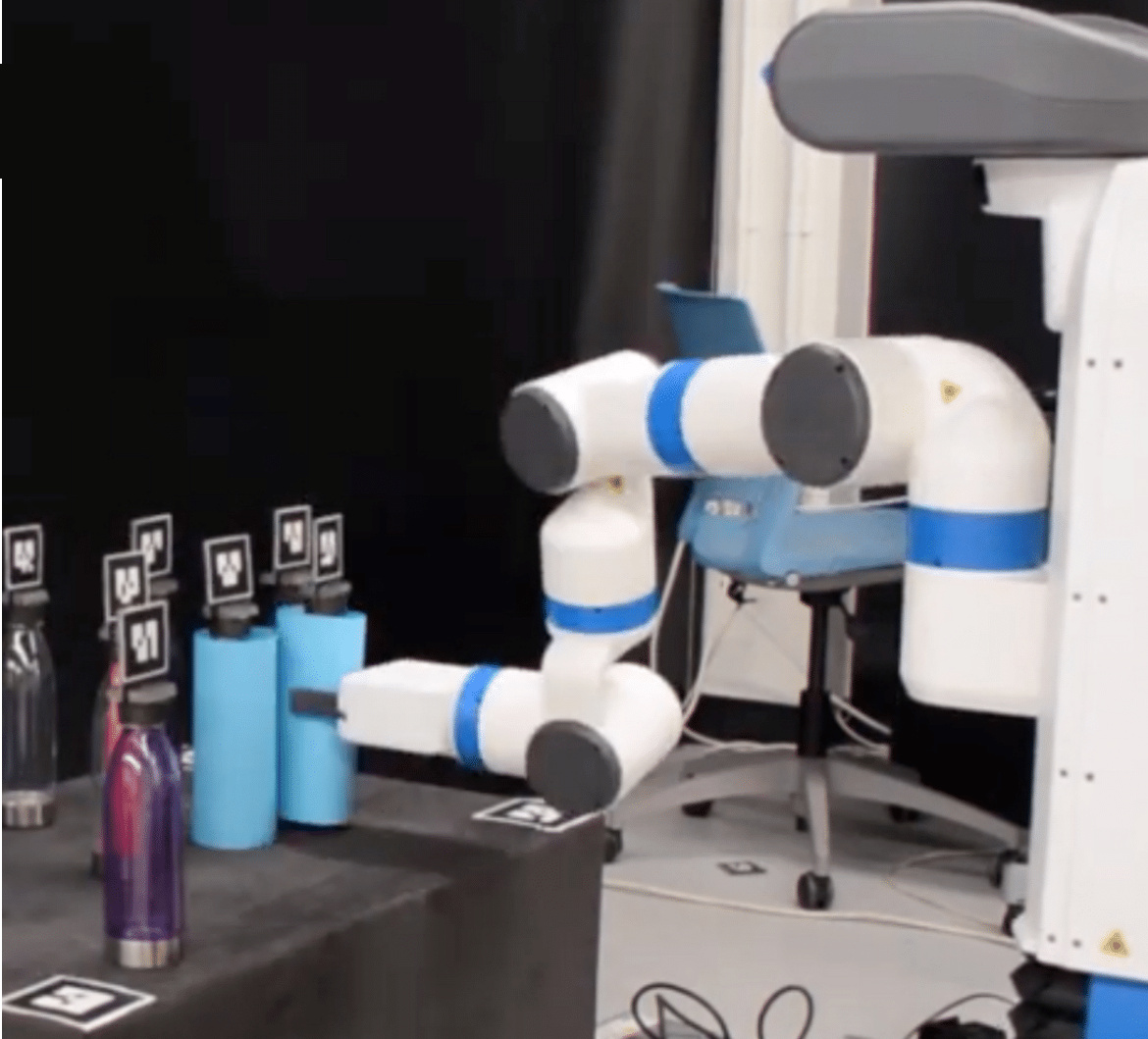}
      \includegraphics[width=1.5in,height=1.5in]{./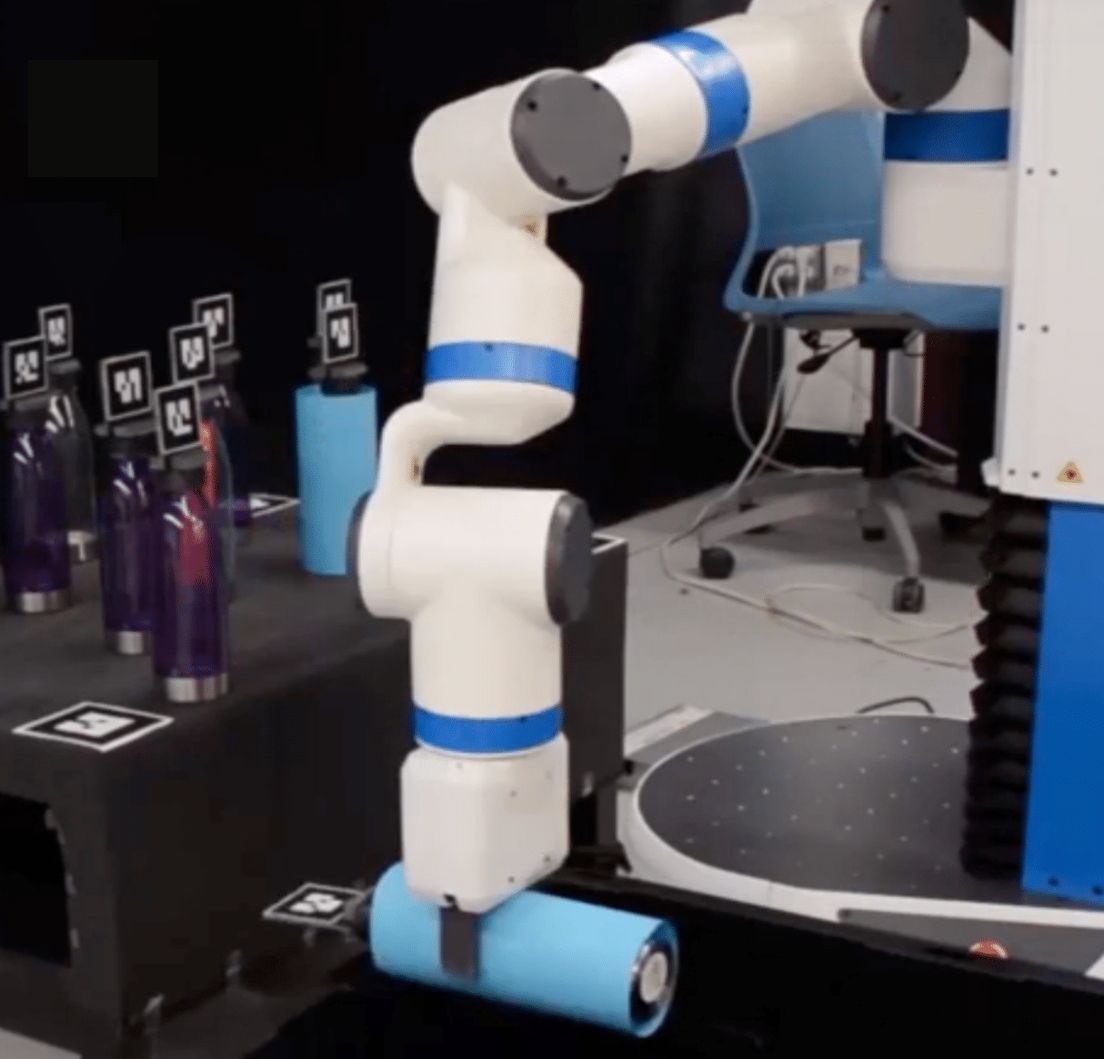}
      \includegraphics[width=1.5in,height=1.5in]{./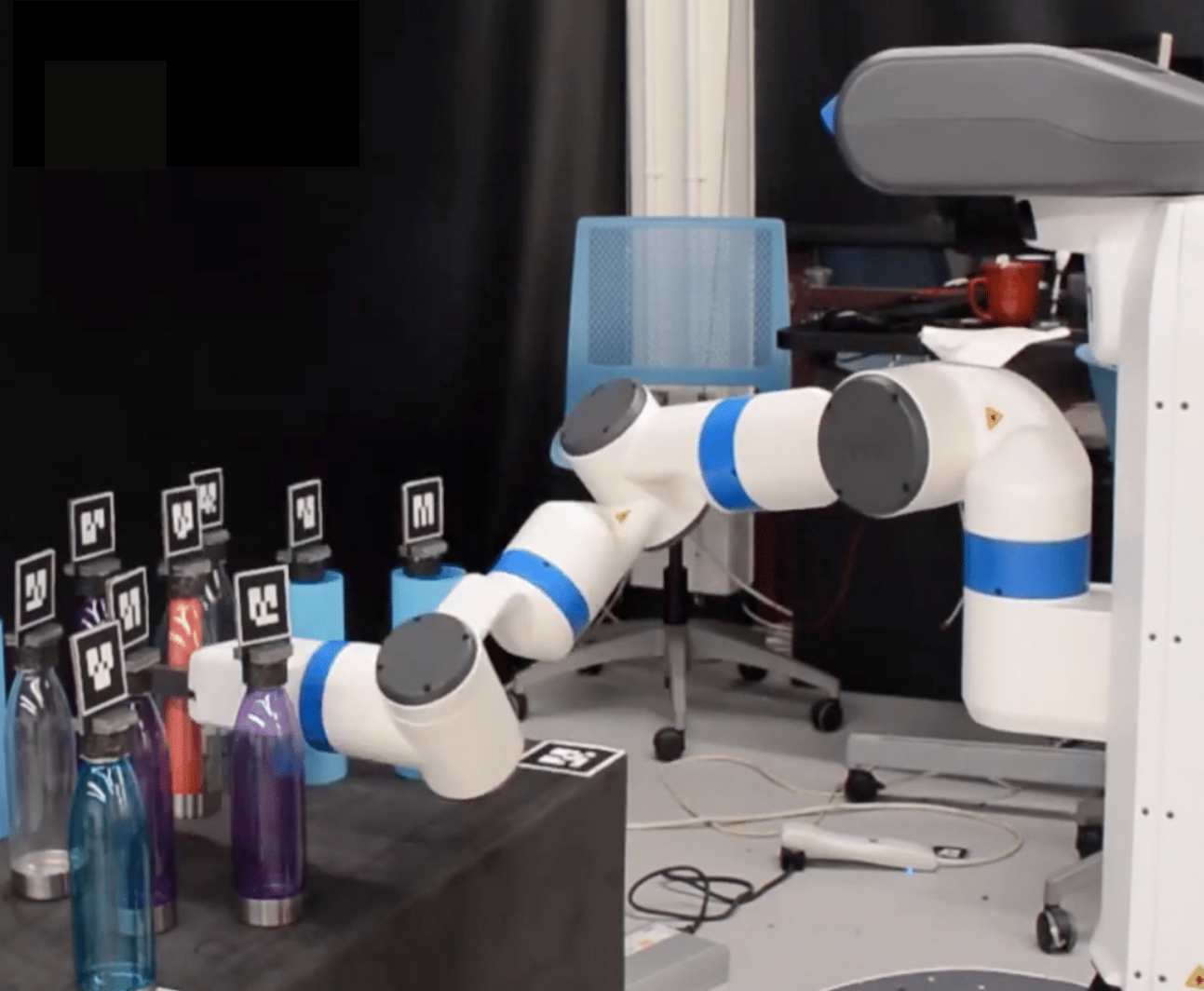}
    \end{subfigure}
    \begin{subfigure}{\textwidth}
      \centering
    \includegraphics[width=1.5in,height=1.5in]{./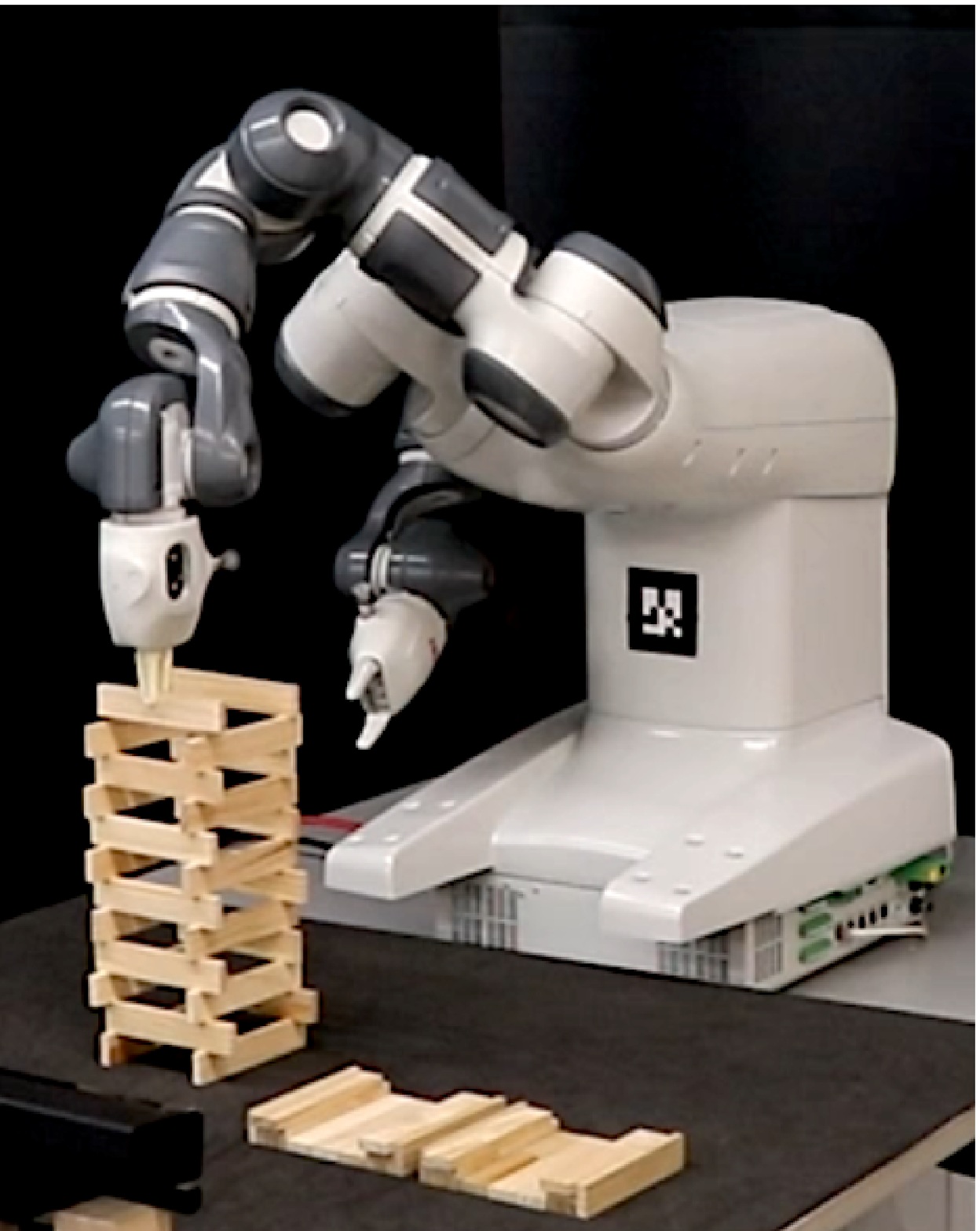}
    \includegraphics[width=1.5in,height=1.5in]{./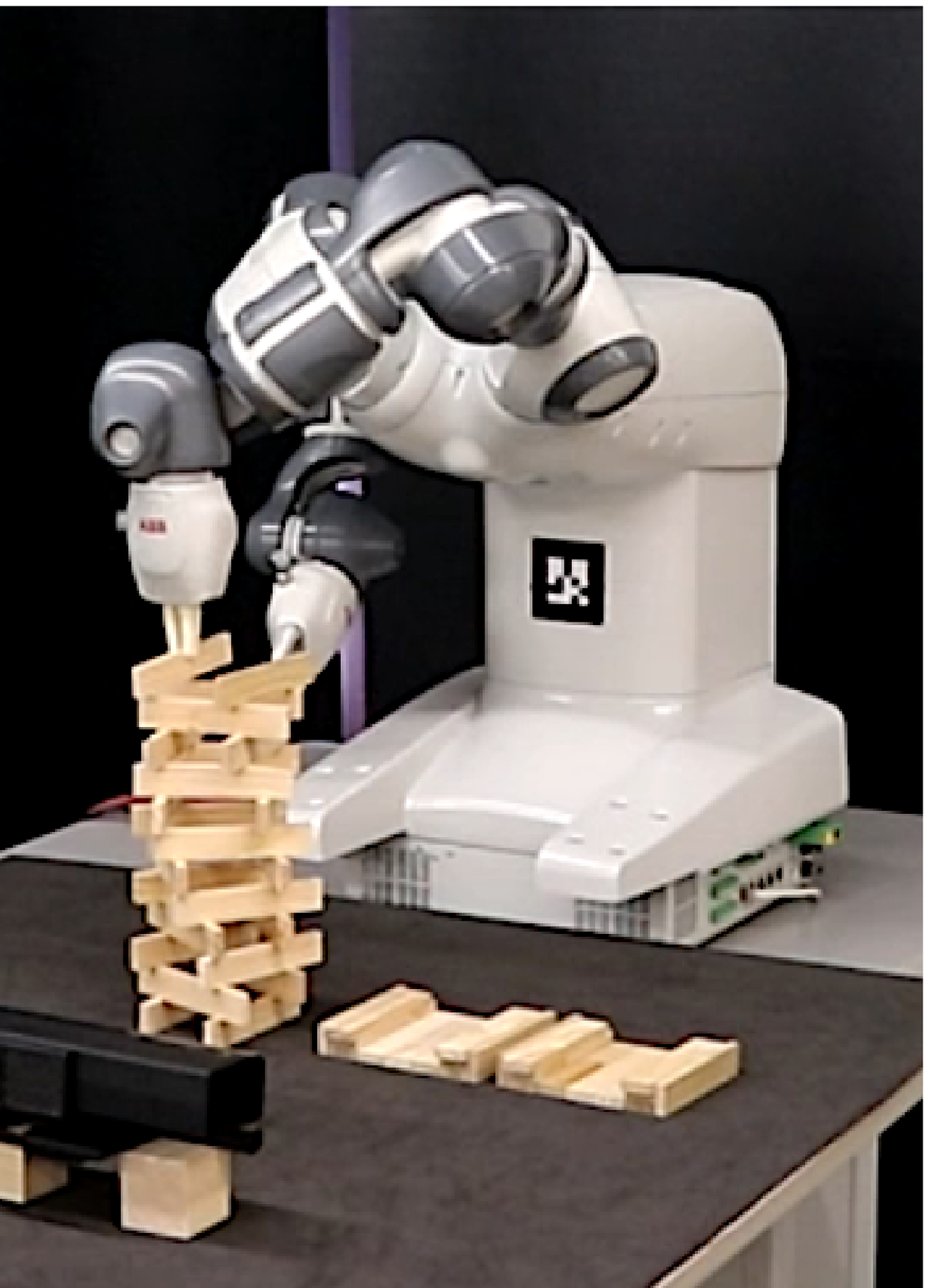}
    \includegraphics[width=1.5in,height=1.5in]{./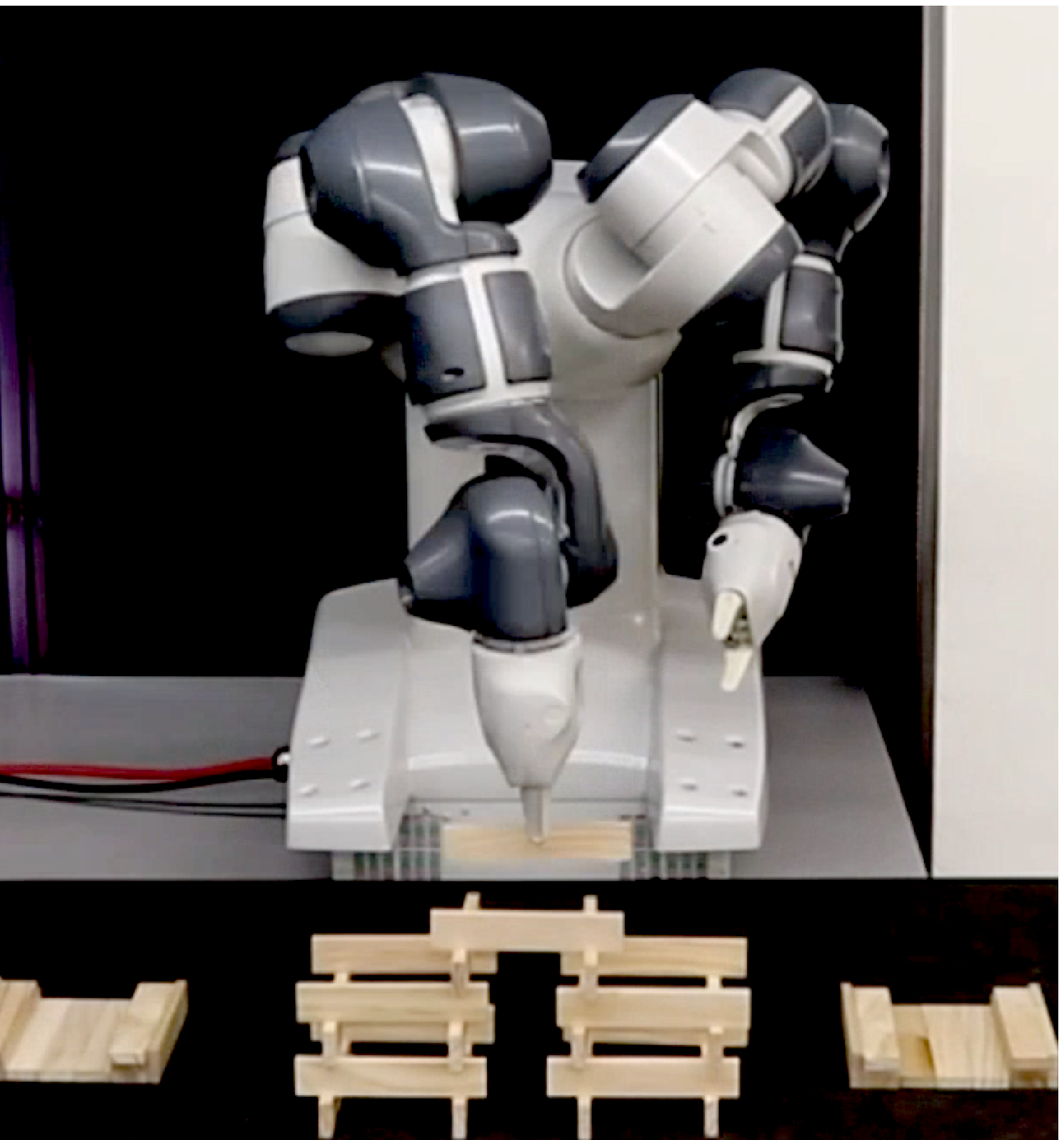}
    \end{subfigure}  \\
    \caption{
    Top: Cluttered Table: The Fetch mobile manipulator uses a STAMP policy to
      pick up a target bottle while avoiding those that are likely to
      be crushed. It replaces a bottle that wasn't crushed (left), 
      discards a bottle that was crushed (center) and picks up the target
      bottle (right).    
    Bottom: Building Structures with Keva Planks: ABB YuMi builds Keva structures using a STAMP policy:
        12-level tower (left), twisted 12-level tower (center), and
        $3$-towers (right).}
    \label{fig:exp_1_2}
  \end{figure*}
  
  \begin{figure*}

    \centering
    \includegraphics[height=0.24\columnwidth]{./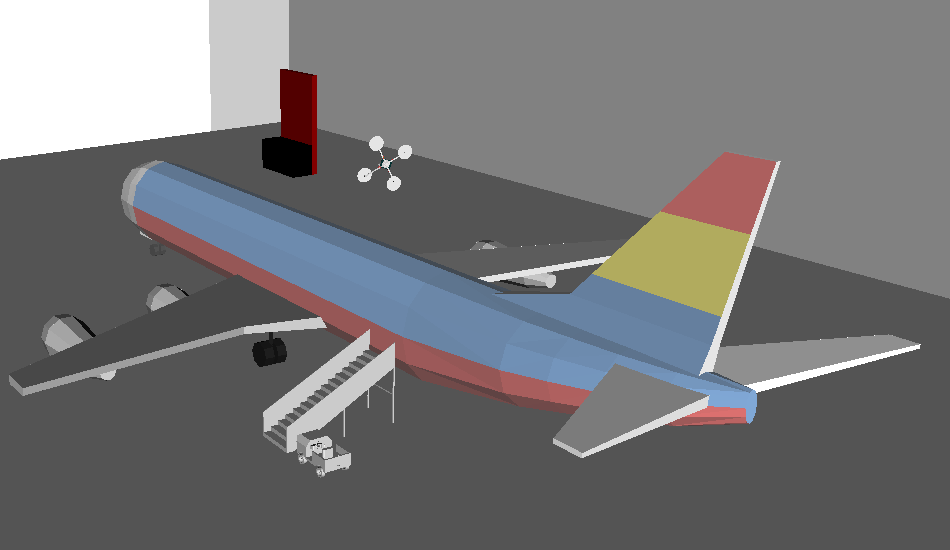}
    \hspace{-0.5em}
    \includegraphics[height=0.24\columnwidth]{./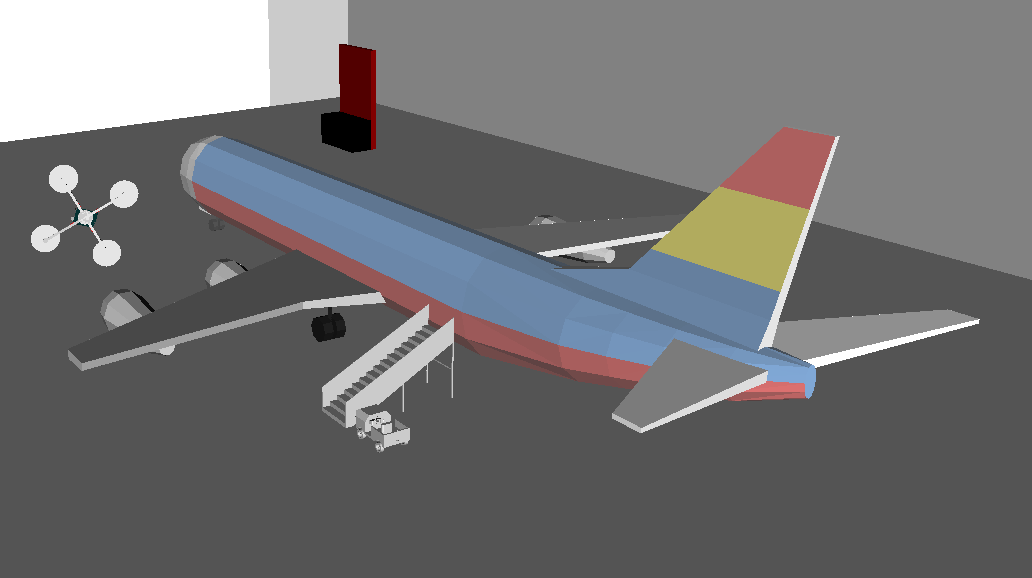}
    \hspace{0.3em}
    \includegraphics[height=0.24\columnwidth]{./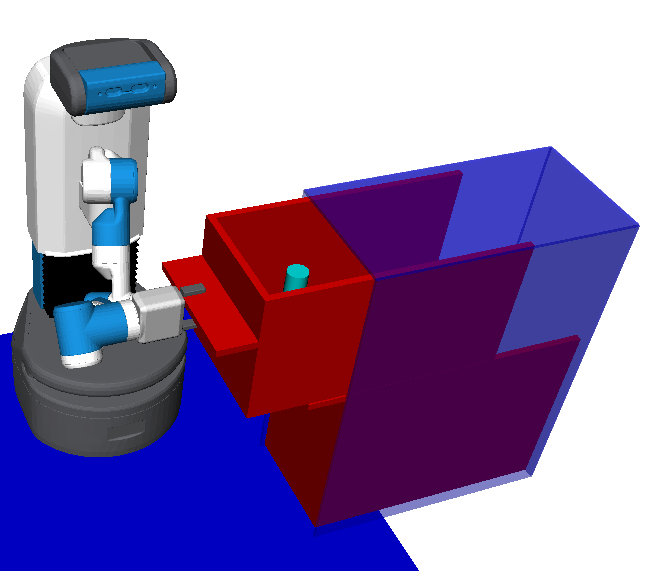}
    \hspace{-0.7em}
    \includegraphics[height=0.24\columnwidth]{./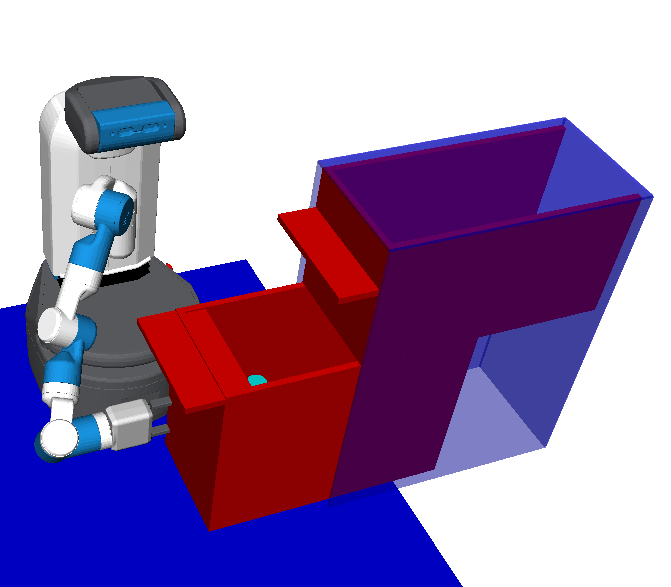}

   
  \caption{Top: Aircraft Inspection: UAV inspects faulty parts of an aircraft in an airplane hangar and alerts the human about the location of the fault. UAV's movements and sensors are noisy, so it may drift from its location or fail to locate the fault. Bottom: Find the can: Fetch searches for a can in drawers. The can can be placed in one of the drawers stochastically.}
  \label{fig:exp_3_6}
\end{figure*}

\paragraph{\textbf{Example}} Fig.~\ref{fig:working_example} illustrates our approach for solving a STAMP problem using Alg.~\ref{alg:atam}. Fig.~\ref{fig:working_example}(a) shows a low-level configuration of an environment. Here, a robot with an effector G is asked to pick up the red object which is surrounded by green, blue, orange, and black objects. Fig.~\ref{fig:working_example}(b) shows a high-level specification of the pick action in the PPDDL format. Fig.~\ref{fig:working_example}(c) shows the policy refinement graph (PRG) that is generated incrementally by Alg.~\ref{alg:atam}.

As explained in Sec.~\ref{sec:atm}, Alg.~\ref{alg:atam} starts with a single node in the PRG -{}- in this case, PRN$1$. Initially, PRN$1$ does not have a high-level policy. Alg.~\ref{alg:atam} uses the abstract action descriptions (abstract model $\abs{\M}$) and an off-the-shelf high-level SSP solver to compute a high-level symbolic policy that reaches the abstract goal (line $5$) and computes $p/c$ ratios for each RTL path in this abstract policy. To compute this ratio, we estimate the cost of refining each high-level action as follows: Suppose that the generators used to concretize the pick actions samples four grasp up poses in four cardinal directions to pick up the object and five motion planning trajectories between robot's current configuration to the grasp pose, then the approximate cost of refining this action would $4\times5 = 20$. We use this approximate cost to compute $p/c$ ratios (red numbers in Fig.~\ref{fig:working_example}). The next step for Alg.~\ref{alg:atam} is to non-deterministically decide between refining the computed high-level policy and refining the abstraction. 

Assume Alg.~\ref{alg:atam}  non-deterministically decides to refine the high-level policy (line $6$). After deciding to refine the high-level policy, Alg.~\ref{alg:atam} selects an RTL path using the $p/c$ ratio and tries to refine each action on this path by instantiating each symbolic argument. Here in this example, the first RTL path would only have a single high-level action \emph{pick(Red, gp$_1$, traj$_1$)} that needs refinement. To instantiate the high-level pick action, it first uses a generator to sample one of the possible grasp poses for the red object and then uses a low-level motion planner to generate a trajectory that would take the robot end-effector G to the selected grasp pose from its current pose. As the red object is surrounded by other objects, all the trajectories that take the end-effector to the grasp pose, are in collision with at least one object. This violates the precondition of the pick action making the refinement infeasible. Alg~\ref{alg:atam} continues trying to refine this action using the local and global backtracking search for a fixed amount of time before again making a non-deterministic choice between refining the high-level policy or the high-level abstraction.

Suppose this time Alg.~\ref{alg:atam} decides to refine the high-level abstraction. To do so, it would identify the failing precondition preventing a valid refinement for the high-level policy and generate a set of child nodes in the PRG -{}- PRN$2$ and PRN$3$ in this case corresponding to failing preconditions \emph{Obstructs(traj$_1$, Blue)} and \emph{Obstructs(traj$_1$, Green)}. Once these nodes are generated, Alg.~\ref{alg:atam} would move on to the next iteration of the approach where it would select one of these newly generated plan refinement nodes and repeat the entire process until a complete task and motion policy is computed. 

\begin{theorem} \label{thm:knapsack} Let $t$ be the time since the start of the algorithm
  at which the refinement of any \emph{RTL} path is completed. If
  path costs are accurate and constant then the total probability of
  unrefined paths at time $t$ is at most $1 - opt(t)/2$, where
  $opt(t)$ is the best possible refinement (in terms of the
  probability of outcomes covered) that could have been achieved in
  time $t$.
\end{theorem}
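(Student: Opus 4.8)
The plan is to recognize the path-selection rule of Alg.~\ref{alg:atam} (lines 9, 15, 21) as the classical greedy heuristic for the $0$--$1$ knapsack problem and to adapt its standard $2$-approximation guarantee. I would model each unrefined \emph{RTL} path $i$ as a knapsack item whose value is its probability $p_i$ of being encountered and whose weight is its refinement cost $c_i$; the available computation time $t$ plays the role of the knapsack capacity. Because the \emph{RTL} paths of a policy tree partition its executions, $\sum_i p_i = 1$, so the probability mass of the \emph{unrefined} paths at time $t$ equals $1$ minus the mass of the paths completed by time $t$. The theorem is therefore equivalent to showing that the probability mass $G(t)$ covered by the greedily refined paths satisfies $G(t) \ge opt(t)/2$, where $opt(t)$ is the optimum of this knapsack instance with capacity $t$.

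The key steps rely on the fractional relaxation of the knapsack. Since the hypothesis fixes costs to be \emph{accurate and constant}, the $p/c$ ratios do not change across iterations, so the algorithm refines paths in a fixed, non-increasing order of $p_i/c_i$. Let $1,2,\dots$ denote this order and let $m$ be the number of paths completed at the completion time $t$, so that $\sum_{i=1}^{m} c_i \le t$. I would first bound $opt(t)$ from above by the fractional optimum $OPT_{\mathrm{frac}}(t)$, which greedily fills paths in the same ratio order and takes at most a single fractional ``critical'' path $m+1$; hence $opt(t) \le OPT_{\mathrm{frac}}(t) \le \sum_{i=1}^{m} p_i + p_{m+1}$. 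The standard averaging argument then gives that at least one of the two quantities $\sum_{i=1}^{m} p_i$ (the completed mass) and $p_{m+1}$ (the critical path) is at least $opt(t)/2$, which yields the claimed factor of two once the critical path is shown to be accounted for.

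The main obstacle is exactly this bookkeeping of the critical path: pure greedy receives credit only for \emph{completed} refinements, so if the straddling path $m+1$ carries more than half of $opt(t)$, the completed mass alone need not reach $opt(t)/2$ at an arbitrary instant. This is where the restriction that $t$ be a \emph{completion time} does the work: at such an instant the completed paths exhaust a ratio prefix of total cost $\sum_{i=1}^m c_i$, so the fractional optimum at capacity $\sum_{i=1}^m c_i$ coincides with the completed mass $\sum_{i=1}^m p_i$, pinning $G(t)$ against the upper bound and making the factor-two slack comfortable (indeed the bound is essentially tight in this idealized regime). The remaining subtlety I would address carefully is the additivity of the costs $c_i$: \emph{RTL} paths in a policy tree share prefixes, so a literal knapsack model over-counts shared refinement work. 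I would handle this by charging each unit of refinement to the first path that consumes it (equivalently, arguing over edge-disjoint refinement increments) and lean on the ``costs are accurate and constant'' hypothesis to keep the reduction exact; verifying that this charging preserves both $\sum_i p_i = 1$ and the ratio order is the step I expect to require the most care.
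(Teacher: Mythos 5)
Your proof is correct and rests on the same reduction the paper uses---\emph{RTL} paths as knapsack items with value $p_i$ and weight $c_i$, refined greedily in order of $p_i/c_i$---but you carry the argument considerably further than the paper does. The paper's proof is a one-line sketch that simply cites the fact that the greedy algorithm is a 2-approximation for knapsack, plus a practical remark about the estimated costs $\hat{c}$ overestimating the true costs. You correctly observe that this citation is not sufficient on its own: the classical 2-approximation requires taking the better of the ratio-ordered prefix and the single most valuable item, and pure ratio-order greedy can be arbitrarily far from optimal at an arbitrary capacity because of the straddling item. Your resolution---that the theorem only evaluates the bound at a \emph{completion time} $t=\sum_{i=1}^{m}c_i$, where the fractional optimum equals the completed prefix mass $\sum_{i=1}^{m}p_i$, so that $opt(t)\le OPT_{\mathrm{frac}}(t)=G(t)$---is the right one, and it in fact establishes the stronger statement that greedy is \emph{optimal} at completion times under the stated hypotheses, making the factor of $2$ in the theorem pure slack. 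You also flag the non-additivity of costs arising from shared prefixes of \emph{RTL} paths in the policy tree; the paper ignores this entirely, and your proposal to charge each refinement increment to the first path that consumes it is a reasonable way to close that gap (it preserves $\sum_i p_i=1$ trivially, though one should check it does not reorder the $p/c$ ratios). In short: same skeleton as the paper, but your version identifies and repairs the step the paper leaves implicit and yields a tighter conclusion.
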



\begin{proof}
  (\emph{Sketch})
   The proof follows from the fact that the greedy algorithm achieves a
   2-approximation for the knapsack problem. In practice, we estimate the
   cost as $\hat{c}$, the product of measures of the true domains of each
   symbolic argument in the given \emph{RTL}. Since, $\hat{c}\ge c$ modulo
   constant factors, the priority queue never can only underestimate the relative value of refining a path, and the algorithm's coverage of
   high-probability contingencies will be closer to optimal than the
   bound suggested in the theorem above. This optimization gives a user
   the option of starting execution when the desired value of the probability
   of covered contingencies has been reached. 
   \end{proof}

\section{Empirical Evaluation}
\label{sec:empirical}

\subsection{Experimental Setup}
We use a total of five domains with varying configurations to evaluate our approach. All these five domains had  a mix of deterministic and stochastic actions. We use an implementation of LAO*~\cite{hansen2001lao} from the MDP-Lib~\cite{mdplib} repository for computing policies for SSPs. We use  OpenRAVE~\cite{diankov10_openrave} robot simulation system with its collision checkers to represent 3D environments and performing collision checking. We also CBiRRT's~\cite{bernson2009cbirrt} implementation from the PrPy~\cite{prpy} suite for computing motion plans. In practice, fixing the horizon $H$ for the SSP solver apriori is infeasible and renders some problems unsolvable. Instead, we implemented a variant that dynamically increases the horizon until the goal is reached with a probability $p > 0$. The source code of the framework along with the videos of our experiments can be found at \url{https://aair-lab.github.io/STAMP.html}

\citet{lagriffoul2018platform} propose several framework-independent benchmark domains for task and motion planning systems. While these benchmarks are proposed for deterministic TAMP systems, characteristics of the domains can still be used to evaluate STAMP systems. Fig. 
\ref{fig:char} shows the criteria fulfilled by every domain used to evaluate our approach. We include the average number of branches in the policy tree as an additional criterion to depict the complexity of stochastic problems. 

\begin{figure*}[t!]
\begin{subfigure}{1\columnwidth}
  \centering
  \includegraphics[width=0.3\columnwidth]{./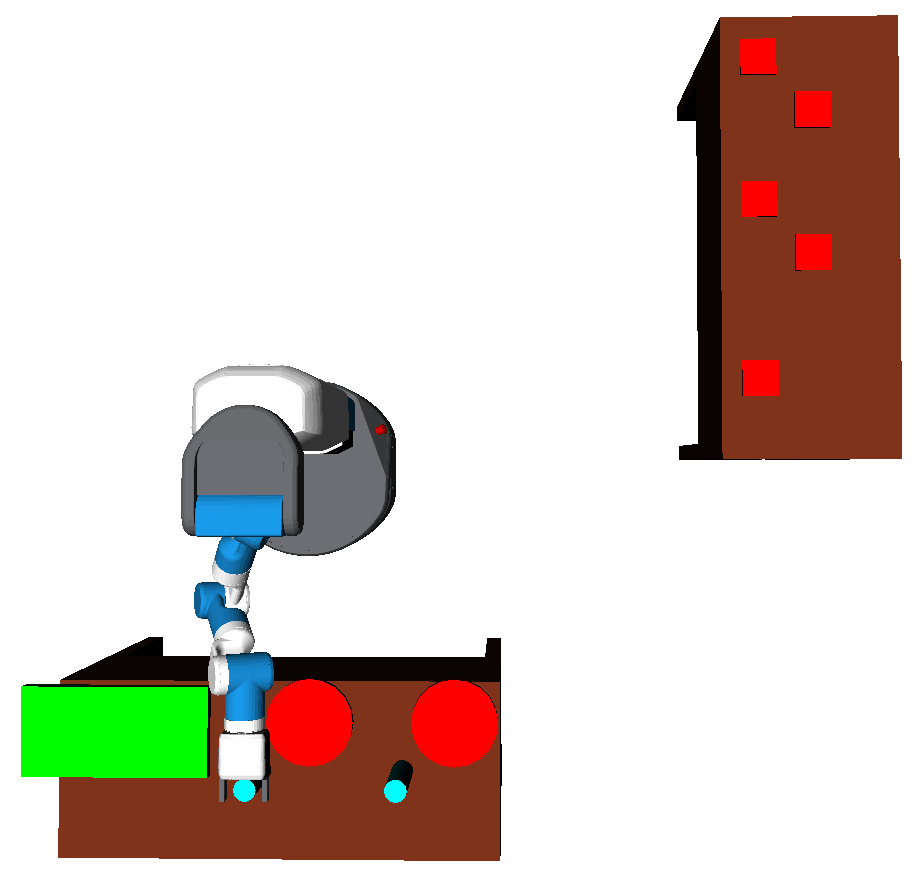}
  \includegraphics[width=0.3\columnwidth]{./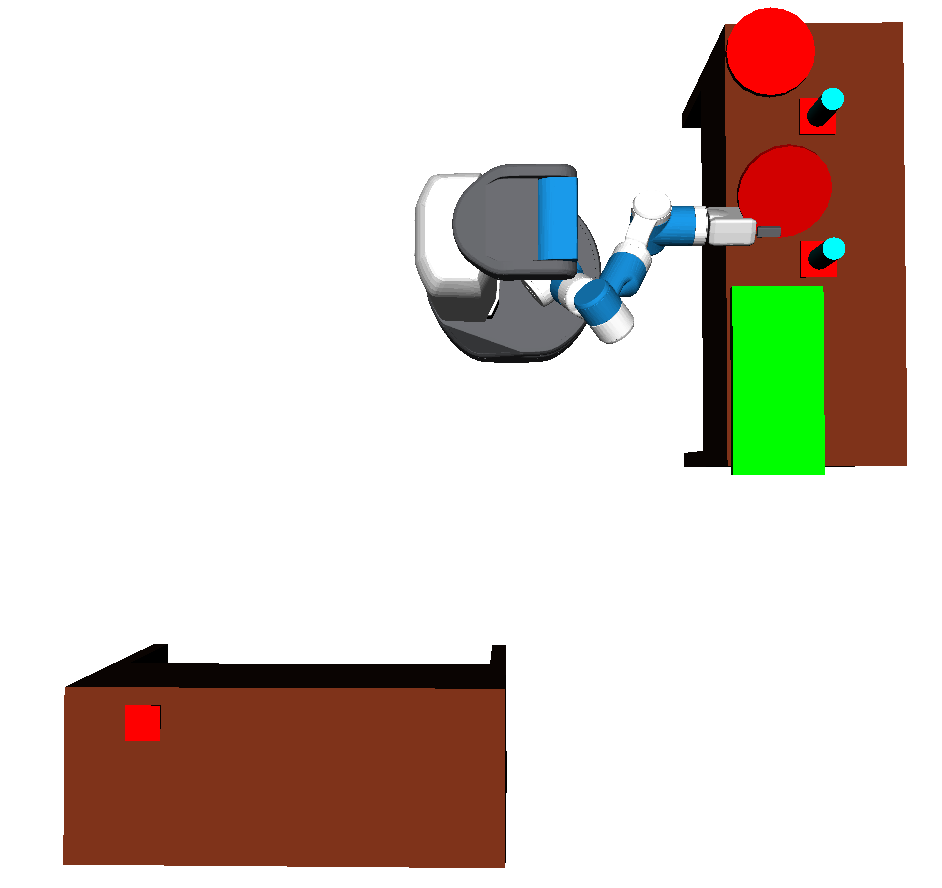}

\end{subfigure}
\caption{Setting up a dining table: Fetch uses STAMP policy to set up a dining table. A tray is available to carry multiple items at a time but carrying more than two items on the tray may break the items. Left: The initial state. Right: The goal state.}
\label{fig:kitchen}

\end{figure*}

\paragraph{\textbf{Problem $\mathbf{1}$: Cluttered Table}}
In this problem, we have a table cluttered with cans, each having different probabilities of being crushed when grasped by the robot. Some cans are delicate and are highly likely to be crushed when the robot grabs them, incurring a high cost (probability for crushing was set to $0.1$, $0.5$ \& $0.9$ in different experiments in Fig. \ref{fig:anytime_result}(a)), while others are normal cans that cannot be crushed. The goal for the robot is to pick up a specified can. We used different numbers of cans ($15$, $20$, $25$) and different random configurations of cans to extensively evaluate the proposed framework. We also used this scenario to evaluate our approach in the real-world (Fig. \ref{fig:exp_1_2}) using the Fetch robot~\cite{wise16_fetch}.


\paragraph{\textbf{Problem $\mathbf{2}$: Aircraft Inspection}}
In this problem, an unmanned aerial vehicle (UAV) is employed to inspect possibly faulty parts of an aircraft in an airplane hangar.  The goal for the agent is to locate the fault and notify the human supervisor about it. Fig. \ref{fig:exp_3_6} shows the simulated environment. The UAV's sensors are inaccurate and may fail to locate the fault with some non-zero probability (failure probability was set to 0.05, 0.1, \& 0.15 for experiments in Fig. \ref{fig:anytime_result}(b)) while inspecting the location; it may also drift to another location while flying from one location to another or while inspecting the parts. The UAV has a limited amount of battery charge. A charging station is available for the UAV to dock and charge itself. All movements use some amount of battery charge depending on the length of the trajectory, but the high-level planner cannot determine whether the current level of charge is sufficient for the action or not as it lacks the details such as current battery level, length of previous and next trajectories, etc.  This makes it necessary to have an interleaved approach that searches for a high-level policy that has valid low-level refinements.

\begin{figure*}[t!]
  \footnotesize
  \begin{center}
  \begin{tabular}{|c|c|c|c|c|c|}
      \hline
      Criteria & Cluttered Table & \makecell{Aircraft \\ Inspection} &  \makecell{Building Keva \\ Structures} &  Kitchen & Find the can \\ \hline
      Infeasible Tasks & \checkmark & \checkmark & ~  &  ~ & ~  \\ \hline
      Large task spaces & \checkmark & \checkmark & \checkmark & \checkmark  & ~  \\ \hline
      Motion/task trade-off & \checkmark & \checkmark  & \checkmark & \checkmark & ~ \\ \hline
      Non-monotonicity & \checkmark & ~ & ~ & \checkmark &  \checkmark \\ \hline
      $\#$branches & $O(2d)$ & $O(4^h)$ & $O(2^n)$ &  $2$ & $2$  \\ \hline
  \end{tabular}
  \end{center}
  \caption{Critera defined by \cite{lagriffoul2018platform} evaluated in each of the test domains.}
  \label{fig:char}
  \end{figure*}

\paragraph{\textbf{Problem $\mathbf{3}$: Building Structures with Keva Planks}}
In this problem, the YuMi robot~\cite{yumi} is used to build different structures using Keva planks. Keva planks are laser-cut wooden planks with uniform geometry. Fig.\,\ref{fig:exp_1_2} and Fig.\,\ref{fig:domainFig} show the target structures.  Planks are placed one at a time by a user after each pickup and placement by the YuMi. Each new plank may be placed at one of a few predefined locations, which adds uncertainty in the planks' initial location. For our experiments, two predefined locations were used to place the planks with a probability of $0.8$ for the first location and a probability of $0.2$ for the second location. In this problem, handwritten goal conditions are used to specify the desired target structure. The YuMi needs a task and motion policy for successively picking up and placing planks to build the structure. There are infinitely many configurations in which one plank can be placed on another, but the abstract model blurs out different regions on the plank. The generator that samples put-down poses for planks on the table uses the target structure to concretize each plank's target put-down pose. The number of branches in a solution tree grows exponentially with the number of planks in the structure and can quickly become huge. For example, a solution tree for a structure with just $10$ planks would have a total of $1024$ branches. Due to the large state space, state-of-the-art SSP solver used for other domains failed to compute a high-level policy for these problems. Our observation shows that most SSP solvers fail to compute a high-level solution for structures that have greater than $6$ planks. However, these structure-building problems exhibit repeating substructure every 1-2 layers that reuse minor variants of the same abstract policy. We used this observation and used a generalized SSP solver \cite{karia2022preliminary} that computes generalized policies for SSPs with such repeating patterns. Other approaches for generalized planning~\cite{srivastava2008AAAI,bonet2009,hu2011,srivastava11_aij} can also be used to automatically extract and utilize such patterns in other problems with repeating structures.


\paragraph{\textbf{Problem $\mathbf{4}$: Setting Up a Dining Table}}
In this problem, the Fetch robot arranges a dining table with two plates and two glasses (Fig. \ref{fig:kitchen}). A tray is available for the robot to use for carrying multiple items at once. If the robot tries to carry more than two objects on a tray at once, the objects can fall from the tray with a probability $0.2$ and that would break the objects. While using the tray can reduce the number of trips between tables, breaking the objects would render the problem unsolvable. As our approach considers all possible outcomes of stochastic actions, it successfully computes a policy that prevents any object from breaking compared to determinization-based approaches that only consider the most likely outcome for stochastic actions that may fail to solve such problems as most-likely scenarios might fail to capture dead ends in the domain.

\begin{figure*}[t!]
  \centering
\includegraphics[width=\textwidth,height=6in]{./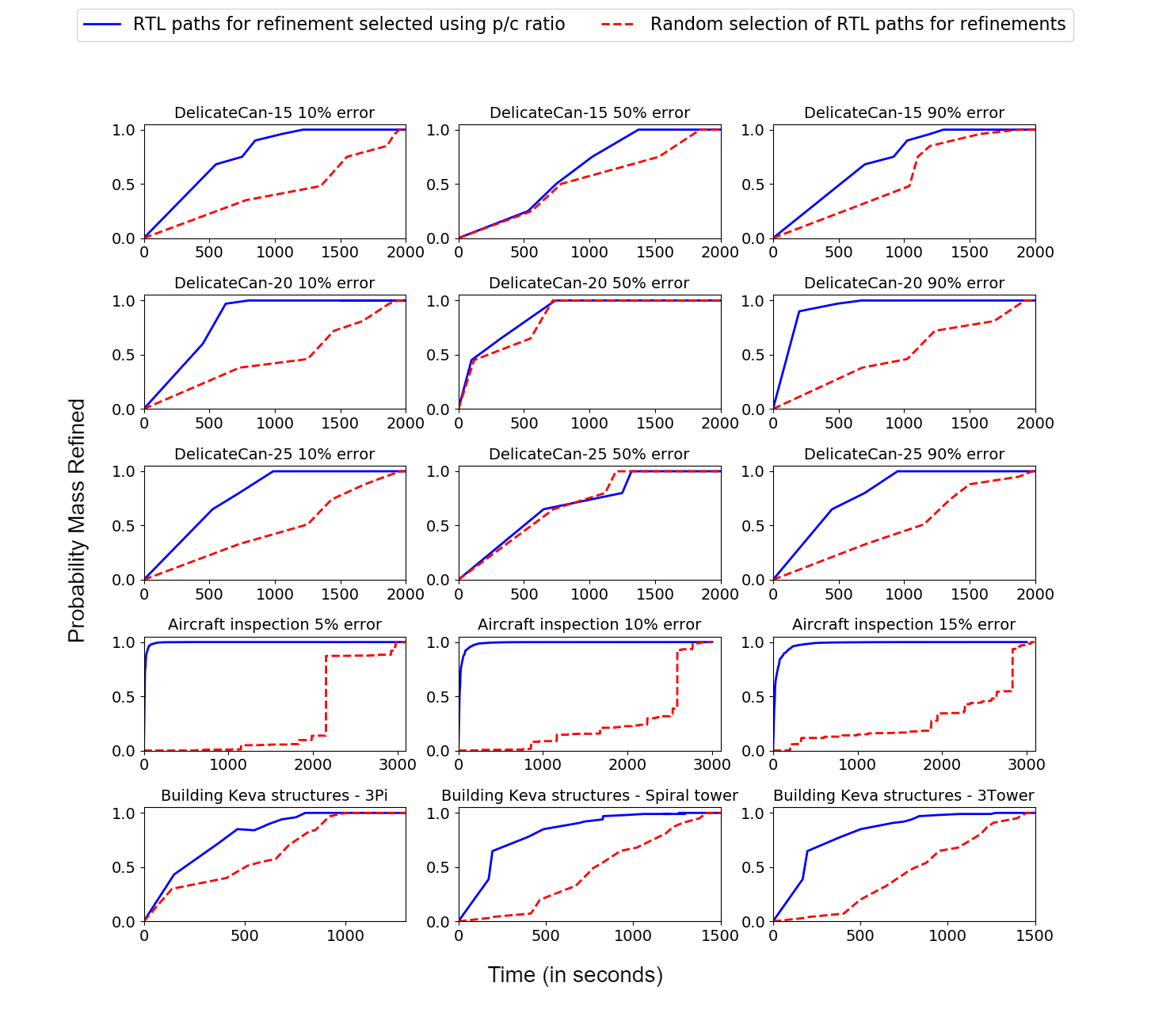}
 \caption{Anytime performance of ATM-MDP, showing the time in seconds (x-axis) vs. probability mass refined (y-axis).}
  \label{fig:anytime_result}
\end{figure*}

\paragraph{\textbf{Problem $\mathbf{5}$: Find the Can}} In this problem, the \emph{Fetch} robot searches for a can that may be present in one of the drawers. Fig. \ref{fig:exp_3_6} shows the simulated environment for the problem. The can is placed in one of the drawers with a given prior distribution. The robot does not have access to the can's location apriori and has to open the drawer to check whether the can is present in the drawer or not. In our experiments, the can is placed in the upper drawer with a probability $0.6$ and in the bottom drawer with a probability $0.4$. 

\subsection{Analysis of the results}


\begin{figure}[t]
    \centering
  \footnotesize
  \begin{tabular}{|l|c|c|}
          \hline
          Problem & \% Solved & Avg. Time (s)  \\ \hline
          
          Cluttered-15 & 100 & 1120.21 $\pm$ 1014.54  \\
          Cluttered-20 &  83 & 1244.32 $\pm$ 990.65  \\
          Cluttered-25 & 75 & 1684.54  $\pm$ 890.78  \\
          Aircraft Inspection & 100 & 2875.01 $\pm$ 103.65 \\
          $3\pi$ &  100 & 1356.34 $\pm$ 75.8    \\
          Tower-12 &  100 &  2232.36 $\pm$ 104.84  \\
          Twisted-Tower-12 & 80 & 3249.92 $\pm$ 773.69   \\ 
          Setting up a dining table & 100 & 1287.23 $\pm$ 321.32 \\
          Find the can & 100 & 36.74 $\pm$ 0.13\\ 

      \hline
      \end{tabular}
      \caption{Summary of times taken to solve the STAMP problems. Timeout: 4000 seconds.}
      \label{fig:time_result_stochastic}
  \end{figure}

\paragraph{\textbf{Nature of the Solutions }}
The most distinct characteristic of the solutions generated through our framework is that they capture all possible contingencies that may arise while executing the policy. E.g., solutions generated for setting up the dinner table (problem 4) avoid placing more than two items on the tray to completely eliminate the possibility of incurring higher expected cost, and solutions for picking up a can from the cluttered table (problem 1) avoid picking up a delicate can for similar reasons. 

\paragraph{\textbf{Quality of the Solutions Over Time}} 
While our approach computes refinements for every action in the policy, the anytime property allows the agent to start executing the actions before all the actions are refined. Our approach computes anytime policies with respect to the possible outcomes handled by a policy at any point in time. Fig. \ref{fig:anytime_result} shows the anytime property of our approach in stochastic test domains. The y-axis shows the probability with which the policy available at any point of time during the algorithm's computation will be able to handle all possible outcomes, and the x-axis shows the time (in seconds) required to compute task and motion policies that handle these outcomes. The results show that with time, the likelihood with which the solution would be able to handle any scenario increases. The agent can use this observation to decide a threshold at which it can start executing the actions. For our experiments, we use a threshold of $60\%$ of all possible outcomes to start the execution of the policy. Our experiments show that in most cases, the problem was solved significantly faster compared to starting execution after refining the entire policy tree (Fig.~\ref{fig:time_result_stochastic}).

\paragraph{\textbf{Impact of Prioritized \emph{RTL} Path Selection}} 
The results presented in Fig. \ref{fig:anytime_result} indicate that when \emph{RTL} paths are selected using the $p/c$ ration (blue line), the framework can quickly handle outcomes with most likely outcomes, compared to a randomized selection of \emph{RTL} paths for refinements (red line). In most cases, $80\%$ of probable executions are covered within about $30\%$ of the total computation time. This characteristic is most evident in the \emph{aircraft inspection} problem due to a large number of possible outcomes and differences in the probability of different outcomes. Such a prioritization does not make a significant impact if all the outcomes are equally probable. E.g., such impact is the least evident in the \emph{cluttered table} problem with the probability of crushing the objects set to $0.5$ given each outcome becomes equally probable and the sequence in which they are handled does not make any difference. 

\paragraph{\textbf{Scalability of the Framework}}
Fig.~\ref{fig:time_result_stochastic} shows the time taken by our approach to compute  complete STAMP solutions by concretizing every action in the entire policy for the given test problems respectively. We combine results for different variants of the test problem as variations in the probabilities of outcomes do not affect the time required to concretize all actions in the entire policy. Values in Fig.~\ref{fig:time_result_stochastic} are averages of $50$ runs with standard deviation. Our empirical evaluation shows that solving a STAMP problem requires significantly more time than  an equivalent TAMP problem. E.g., the stochastic variant of the aircraft inspection problem takes nearly $15$ times more time than the deterministic version as the stochastic variant had $780$ branches in the solution tree compared to a single branch in the deterministic variant. These results reinforce our hypothesis that an anytime approach that prioritizes high-probability scenarios over low-probability situations but still considers all possible outcomes suits better than an approach that does not consider all possible outcomes while showing scalability of our approach to solve large problems. Results for larger problems such as \emph{Twisted-Tower-12} and \emph{Cluttered-25} show \emph{scalability} of our system. Even though our approach needs a significant time to compute solutions for such huge problems due to a large number of RTL paths in the policy trees, it was able to solve almost all problems in these problem settings.




\section*{Acknowledgements}
This work was supported in parts by the NSF under grants IIS 1844325, IIS 1909370, and OIA 1936997.

\bibliography{ref}

\begin{thebibliography}{}

\bibitem[\protect\BCAY{ABB}{ABB}{2015}]{yumi}
ABB \BBOP2015\BBCP.
\newblock \BBOQ {ABB YuMi - IRB 14000}\BBCQ\
\newblock
  \url{https://new.abb.com/products/robotics/collaborative-robots/irb-14000-yumi}.

\bibitem[\protect\BCAY{Abdelhadi\ \BBA\ Cherki}{Abdelhadi\ \BBA\
  Cherki}{2019}]{larach2019sspdead}
Abdelhadi, L.\BBACOMMA\  \BBA\ Cherki, D. \BBOP2019\BBCP.
\newblock \BBOQ A new transformed stochastic shortest path with dead ends and
  energy constraint\BBCQ\
\newblock {\Bem International Journal of Advanced Science and Technology},
  {\Bem 129}, 43--58.

\bibitem[\protect\BCAY{Bacchus\ \BBA\ Kabanza}{Bacchus\ \BBA\
  Kabanza}{2000}]{bacchus2000using}
Bacchus, F.\BBACOMMA\  \BBA\ Kabanza, F. \BBOP2000\BBCP.
\newblock \BBOQ Using temporal logics to express search control knowledge for
  planning\BBCQ\
\newblock {\Bem Artificial intelligence}, {\Bem 116\/}(1-2), 123--191.

\bibitem[\protect\BCAY{Bai, Srivastava,\ \BBA\ Russell}{Bai
  et~al.}{2016}]{bai16_markovian}
Bai, A., Srivastava, S., \BBA\ Russell, S.~J. \BBOP2016\BBCP.
\newblock \BBOQ Markovian state and action abstractions for {MDP}s via
  hierarchical {MCTS}.\BBCQ\
\newblock In {\Bem Proc. International Joint Conference on Artificial
  Intelligence}.

\bibitem[\protect\BCAY{Barto, Bradtke,\ \BBA\ Singh}{Barto
  et~al.}{1993}]{barto1993rtdp}
Barto, A., Bradtke, S., \BBA\ Singh, S. \BBOP1993\BBCP.
\newblock \BBOQ Learning to act using real-time dynamic programming\BBCQ\
\newblock {\Bem Artificial Intelligence}, {\Bem 72}, 81--138.

\bibitem[\protect\BCAY{Bercher, Keen,\ \BBA\ Biundo}{Bercher
  et~al.}{2014}]{bercher2014hybrid}
Bercher, P., Keen, S., \BBA\ Biundo, S. \BBOP2014\BBCP.
\newblock \BBOQ Hybrid planning heuristics based on task decomposition
  graphs\BBCQ\
\newblock In {\Bem Seventh Annual Symposium on Combinatorial Search}.

\bibitem[\protect\BCAY{{Berenson}, {Srinivasa}, {Ferguson},\ \BBA\
  {Kuffner}}{{Berenson} et~al.}{2009}]{bernson2009cbirrt}
{Berenson}, D., {Srinivasa}, S.~S., {Ferguson}, D., \BBA\ {Kuffner}, J.~J.
  \BBOP2009\BBCP.
\newblock \BBOQ Manipulation planning on constraint manifolds\BBCQ\
\newblock In {\Bem Proc. International Conference on Robotics and Automation}.

\bibitem[\protect\BCAY{Bertsekas\ \BBA\ Tsitsiklis}{Bertsekas\ \BBA\
  Tsitsiklis}{1991}]{bertsekas91_ssp}
Bertsekas, D.~P.\BBACOMMA\  \BBA\ Tsitsiklis, J.~N. \BBOP1991\BBCP.
\newblock \BBOQ An analysis of stochastic shortest path problems\BBCQ\
\newblock {\Bem Mathematics of Operations Research}, {\Bem 16\/}(3), 580--595.

\bibitem[\protect\BCAY{Blum\ \BBA\ Furst}{Blum\ \BBA\
  Furst}{1997}]{blum1997fast}
Blum, A.~L.\BBACOMMA\  \BBA\ Furst, M.~L. \BBOP1997\BBCP.
\newblock \BBOQ Fast planning through planning graph analysis\BBCQ\
\newblock {\Bem Artificial intelligence}, {\Bem 90\/}(1-2), 281--300.

\bibitem[\protect\BCAY{Bonet\ \BBA\ Geffner}{Bonet\ \BBA\
  Geffner}{2001}]{bonet2001planning}
Bonet, B.\BBACOMMA\  \BBA\ Geffner, H. \BBOP2001\BBCP.
\newblock \BBOQ Planning as heuristic search\BBCQ\
\newblock {\Bem Artificial Intelligence}, {\Bem 129\/}(1-2), 5--33.

\bibitem[\protect\BCAY{Bonet\ \BBA\ Geffner}{Bonet\ \BBA\
  Geffner}{2003}]{bonet2003labeled}
Bonet, B.\BBACOMMA\  \BBA\ Geffner, H. \BBOP2003\BBCP.
\newblock \BBOQ Labeled rtdp: Improving the convergence of real-time dynamic
  programming.\BBCQ\
\newblock In {\Bem Proc. International Conference on Automated Planning and
  Scheduling}.

\bibitem[\protect\BCAY{Bonet, Palacios,\ \BBA\ Geffner}{Bonet
  et~al.}{2009}]{bonet2009}
Bonet, B., Palacios, H., \BBA\ Geffner, H. \BBOP2009\BBCP.
\newblock \BBOQ Automatic derivation of memoryless policies and finite-state
  controllers using classical planners\BBCQ\
\newblock In {\Bem Proc. International Conference on Automated Planning and
  Scheduling}.

\bibitem[\protect\BCAY{Cambon, Alami,\ \BBA\ Gravot}{Cambon
  et~al.}{2009}]{cambon09_asymov}
Cambon, S., Alami, R., \BBA\ Gravot, F. \BBOP2009\BBCP.
\newblock \BBOQ A hybrid approach to intricate motion, manipulation and task
  planning\BBCQ\
\newblock {\Bem International Journal of Robotics Research}, {\Bem 28},
  104--126.

\bibitem[\protect\BCAY{Dantam, Kingston, Chaudhuri,\ \BBA\ Kavraki}{Dantam
  et~al.}{2018}]{dantam2018incremental}
Dantam, N.~T., Kingston, Z.~K., Chaudhuri, S., \BBA\ Kavraki, L.~E.
  \BBOP2018\BBCP.
\newblock \BBOQ An incremental constraint-based framework for task and motion
  planning\BBCQ\
\newblock {\Bem The International Journal of Robotics Research}, {\Bem
  37\/}(10), 1134--1151.

\bibitem[\protect\BCAY{Dean, Kaelbling, Kirman,\ \BBA\ Nicholson}{Dean
  et~al.}{1995}]{dean95_anytime}
Dean, T., Kaelbling, L.~P., Kirman, J., \BBA\ Nicholson, A. \BBOP1995\BBCP.
\newblock \BBOQ Planning under time constraints in stochastic domains\BBCQ\
\newblock {\Bem Artificial Intelligence}, {\Bem 76\/}(1-2), 35--74.

\bibitem[\protect\BCAY{Dean\ \BBA\ Boddy}{Dean\ \BBA\
  Boddy}{1988}]{dean88_anytime}
Dean, T.~L.\BBACOMMA\  \BBA\ Boddy, M.~S. \BBOP1988\BBCP.
\newblock \BBOQ An analysis of time-dependent planning.\BBCQ\
\newblock In {\Bem Proc. Association for the Advancement of Artificial
  Intelligence}.

\bibitem[\protect\BCAY{Diankov}{Diankov}{2010}]{diankov10_openrave}
Diankov, R. \BBOP2010\BBCP.
\newblock {\Bem Automated Construction of Robotic Manipulation Programs}.
\newblock Ph.D.\ thesis, Carnegie Mellon University.

\bibitem[\protect\BCAY{Erol, Hendler,\ \BBA\ Nau}{Erol
  et~al.}{1995}]{erol1995semantics}
Erol, K., Hendler, J.~A., \BBA\ Nau, D.~S. \BBOP1995\BBCP.
\newblock \BBOQ Semantics for hierarchical task-network planning\BBCQ\
\newblock \BTR, MARYLAND UNIV COLLEGE PARK INST FOR SYSTEMS RESEARCH.

\bibitem[\protect\BCAY{Fox\ \BBA\ Long}{Fox\ \BBA\ Long}{2003}]{maria2003pddl2}
Fox, M.\BBACOMMA\  \BBA\ Long, D. \BBOP2003\BBCP.
\newblock \BBOQ {PDDL2.1}: An extension to {PDDL} for expressing temporal
  planning domains\BBCQ\
\newblock {\Bem Jornal of Artificial Intelligence Research}, {\Bem 20},
  61--124.

\bibitem[\protect\BCAY{Garrett, Chitnis, Holladay, Kim, Silver, Kaelbling,\
  \BBA\ Lozano-Pérez}{Garrett et~al.}{2021}]{garrett2021integrated}
Garrett, C.~R., Chitnis, R., Holladay, R., Kim, B., Silver, T., Kaelbling,
  L.~P., \BBA\ Lozano-Pérez, T. \BBOP2021\BBCP.
\newblock \BBOQ Integrated task and motion planning\BBCQ\
\newblock {\Bem Annual Review of Control, Robotics, and Autonomous Systems},
  {\Bem 4\/}(1), null.

\bibitem[\protect\BCAY{Garrett, Lozano-P{\'e}rez,\ \BBA\ Kaelbling}{Garrett
  et~al.}{2015}]{garrett15_ffrob}
Garrett, C.~R., Lozano-P{\'e}rez, T., \BBA\ Kaelbling, L.~P. \BBOP2015\BBCP.
\newblock \BBOQ {FF}rob: An efficient heuristic for task and motion
  planning\BBCQ\
\newblock In {\Bem Proc. Workshop on Algorithmic Foundation of Robotics}.

\bibitem[\protect\BCAY{Garrett, Lozano-P{\'e}rez,\ \BBA\ Kaelbling}{Garrett
  et~al.}{2020a}]{garrett2020pddlstream}
Garrett, C.~R., Lozano-P{\'e}rez, T., \BBA\ Kaelbling, L.~P. \BBOP2020a\BBCP.
\newblock \BBOQ Pddlstream: Integrating symbolic planners and blackbox samplers
  via optimistic adaptive planning\BBCQ\
\newblock In {\Bem Proc. International Conference on Automated Planning and
  Scheduling}.

\bibitem[\protect\BCAY{Garrett, Paxton, Lozano-P{\'e}rez, Kaelbling,\ \BBA\
  Fox}{Garrett et~al.}{2020b}]{garrett2020online}
Garrett, C.~R., Paxton, C., Lozano-P{\'e}rez, T., Kaelbling, L.~P., \BBA\ Fox,
  D. \BBOP2020b\BBCP.
\newblock \BBOQ Online replanning in belief space for partially observable task
  and motion problems\BBCQ\
\newblock In {\Bem 2020 IEEE International Conference on Robotics and
  Automation (ICRA)}, \BPGS\ 5678--5684. IEEE.

\bibitem[\protect\BCAY{Hadfield-Menell, Groshev, Chitnis,\ \BBA\
  Abbeel}{Hadfield-Menell et~al.}{2015}]{hadfield15_modular}
Hadfield-Menell, D., Groshev, E., Chitnis, R., \BBA\ Abbeel, P. \BBOP2015\BBCP.
\newblock \BBOQ Modular task and motion planning in belief space\BBCQ\
\newblock In {\Bem Proc. International Conference on Intelligent Robots and
  Systems}.

\bibitem[\protect\BCAY{Hansen\ \BBA\ Zilberstein}{Hansen\ \BBA\
  Zilberstein}{2001}]{hansen2001lao}
Hansen, E.\BBACOMMA\  \BBA\ Zilberstein, S. \BBOP2001\BBCP.
\newblock \BBOQ Lao*: A heuristic search algorithm that finds solutions with
  loops\BBCQ\
\newblock {\Bem Artificial Intelligence}, {\Bem 129}, 35--62.

\bibitem[\protect\BCAY{Hausknecht\ \BBA\ Stone}{Hausknecht\ \BBA\
  Stone}{2016}]{hausknecht16_iclr}
Hausknecht, M.\BBACOMMA\  \BBA\ Stone, P. \BBOP2016\BBCP.
\newblock \BBOQ Deep reinforcement learning in parameterized action space\BBCQ\
\newblock In {\Bem Proc. International Conference on Learning Representations}.

\bibitem[\protect\BCAY{Hertle, Dornhege, Keller,\ \BBA\ Nebel}{Hertle
  et~al.}{2012}]{dornhege12_semantic}
Hertle, A., Dornhege, C., Keller, T., \BBA\ Nebel, B. \BBOP2012\BBCP.
\newblock \BBOQ Planning with semantic attachments: An object-oriented
  view\BBCQ\
\newblock In {\Bem Proc. European Conference on Artificial Intelligence}.

\bibitem[\protect\BCAY{Hoffmann}{Hoffmann}{2001}]{hoffmann2001ff}
Hoffmann, J. \BBOP2001\BBCP.
\newblock \BBOQ Ff: The fast-forward planning system\BBCQ\
\newblock {\Bem AI magazine}, {\Bem 22\/}(3), 57--57.

\bibitem[\protect\BCAY{Hostetler, Fern,\ \BBA\ Dietterich}{Hostetler
  et~al.}{2014}]{hostetler14_state}
Hostetler, J., Fern, A., \BBA\ Dietterich, T. \BBOP2014\BBCP.
\newblock \BBOQ State aggregation in monte carlo tree search.\BBCQ\
\newblock In {\Bem Proc. Association for the Advancement of Artificial
  Intelligence}.

\bibitem[\protect\BCAY{Hu\ \BBA\ De~Giacomo}{Hu\ \BBA\
  De~Giacomo}{2011}]{hu2011}
Hu, Y.\BBACOMMA\  \BBA\ De~Giacomo, G. \BBOP2011\BBCP.
\newblock \BBOQ Generalized planning: Synthesizing plans that work for multiple
  environments\BBCQ\
\newblock In {\Bem Proc. International Joint Conference on Artificial
  Intelligence}.

\bibitem[\protect\BCAY{Kaelbling\ \BBA\ Lozano-P{\'e}rez}{Kaelbling\ \BBA\
  Lozano-P{\'e}rez}{2011a}]{kaelbling11_hierarchical}
Kaelbling, L.~P.\BBACOMMA\  \BBA\ Lozano-P{\'e}rez, T. \BBOP2011a\BBCP.
\newblock \BBOQ Hierarchical task and motion planning in the now\BBCQ\
\newblock In {\Bem Proc. International Conference of Robotics and Automation}.

\bibitem[\protect\BCAY{Kaelbling\ \BBA\ Lozano-P{\'e}rez}{Kaelbling\ \BBA\
  Lozano-P{\'e}rez}{2011b}]{kaelbling2011hierarchical}
Kaelbling, L.~P.\BBACOMMA\  \BBA\ Lozano-P{\'e}rez, T. \BBOP2011b\BBCP.
\newblock \BBOQ Hierarchical task and motion planning in the now\BBCQ\
\newblock In {\Bem Proc. International Conference on Robotics and Automation}.

\bibitem[\protect\BCAY{Kambhampati, Mali,\ \BBA\ Srivastava}{Kambhampati
  et~al.}{1998}]{kambhampati1998hybrid}
Kambhampati, S., Mali, A., \BBA\ Srivastava, B. \BBOP1998\BBCP.
\newblock \BBOQ Hybrid planning for partially hierarchical domains\BBCQ\
\newblock In {\Bem AAAI/IAAI}.

\bibitem[\protect\BCAY{Karia, Nayyar,\ \BBA\ Srivastava}{Karia
  et~al.}{2022}]{karia2022preliminary}
Karia, R., Nayyar, R.~K., \BBA\ Srivastava, S. \BBOP2022\BBCP.
\newblock \BBOQ Learning generalized policy classes for stochastic shortest
  path problems\BBCQ\
\newblock In {\Bem arXiv preprint arXiv:2204.04301}.

\bibitem[\protect\BCAY{Kavraki, Svestka, Latombe,\ \BBA\ Overmars}{Kavraki
  et~al.}{1996}]{kavraki1996probabilistic}
Kavraki, L.~E., Svestka, P., Latombe, J.-C., \BBA\ Overmars, M.~H.
  \BBOP1996\BBCP.
\newblock \BBOQ Probabilistic roadmaps for path planning in high-dimensional
  configuration spaces\BBCQ\
\newblock {\Bem IEEE transactions on Robotics and Automation}, {\Bem 12\/}(4),
  566--580.

\bibitem[\protect\BCAY{Knoblock}{Knoblock}{1990}]{knoblock1990learning}
Knoblock, C.~A. \BBOP1990\BBCP.
\newblock \BBOQ Learning abstraction hierarchies for problem solving.\BBCQ\
\newblock In {\Bem Proc. Autonomous Agents and Multiagent Systems}.

\bibitem[\protect\BCAY{Koval}{Koval}{2015}]{prpy}
Koval, M. \BBOP2015\BBCP.
\newblock \BBOQ Prpy\BBCQ\
\newblock \url{https://github.com/personalrobotics/prpy}.

\bibitem[\protect\BCAY{{Kuffner}\ \BBA\ {LaValle}}{{Kuffner}\ \BBA\
  {LaValle}}{2000}]{kuffner2000birrt}
{Kuffner}, J.~J.\BBACOMMA\  \BBA\ {LaValle}, S.~M. \BBOP2000\BBCP.
\newblock \BBOQ Rrt-connect: An efficient approach to single-query path
  planning\BBCQ\
\newblock In {\Bem Proc. International Conference on Robotics and Automation}.

\bibitem[\protect\BCAY{Lagriffoul, Dantam, Garrett, Akbari, Srivastava,\ \BBA\
  Kavraki}{Lagriffoul et~al.}{2018}]{lagriffoul2018platform}
Lagriffoul, F., Dantam, N.~T., Garrett, C., Akbari, A., Srivastava, S., \BBA\
  Kavraki, L.~E. \BBOP2018\BBCP.
\newblock \BBOQ Platform-independent benchmarks for task and motion
  planning\BBCQ\
\newblock {\Bem IEEE Robotics and Automation Letters}, {\Bem 3\/}(4),
  3765--3772.

\bibitem[\protect\BCAY{LaValle}{LaValle}{2006}]{Lav06}
LaValle, S. .~M. \BBOP2006\BBCP.
\newblock {\Bem Planning Algorithms}.
\newblock Cambridge University Press, Cambridge, U.K.
\newblock Available at http://planning.cs.uiuc.edu/.

\bibitem[\protect\BCAY{Lavalle}{Lavalle}{1998}]{Lavalle98rrt}
Lavalle, S.~M. \BBOP1998\BBCP.
\newblock \BBOQ Rapidly-exploring random trees: A new tool for path
  planning\BBCQ\
\newblock \BTR.

\bibitem[\protect\BCAY{Li, Walsh,\ \BBA\ Littman}{Li
  et~al.}{2006}]{li06_abstractMDP}
Li, L., Walsh, T.~J., \BBA\ Littman, M.~L. \BBOP2006\BBCP.
\newblock \BBOQ Towards a unified theory of state abstraction for mdps.\BBCQ\
\newblock In {\Bem Proc. International Symposium on Artificial Intelligence and
  Mathematics}.

\bibitem[\protect\BCAY{Marecki, Topol, Tambe, et~al.}{Marecki
  et~al.}{2006}]{marecki06_cmdp}
Marecki, J., Topol, Z., Tambe, M., et~al. \BBOP2006\BBCP.
\newblock \BBOQ A fast analytical algorithm for mdps with continuous state
  sp\BBCQ\
\newblock In {\Bem Proc. Autonomous Agents and Multiagent Systems}.

\bibitem[\protect\BCAY{Marthi, Russell,\ \BBA\ Wolfe}{Marthi
  et~al.}{2007}]{marthi2007hla}
Marthi, B., Russell, S.~J., \BBA\ Wolfe, J.~A. \BBOP2007\BBCP.
\newblock \BBOQ Angelic semantics for high-level actions.\BBCQ\
\newblock In {\Bem Proc. International Conference on Automated Planning and
  Scheduling}.

\bibitem[\protect\BCAY{McDermott, Ghallab, Howe, Knoblock, Ram, Veloso, Weld,\
  \BBA\ Wilkins}{McDermott et~al.}{1998}]{McDermott1998PDDL}
McDermott, D., Ghallab, M., Howe, A., Knoblock, C., Ram, A., Veloso, M., Weld,
  D.~S., \BBA\ Wilkins, D. \BBOP1998\BBCP.
\newblock \BBOQ {PDDL} - the planning domain definition language\BBCQ\
\newblock \BTR\ CVC TR-98-003/DCS TR-1165, Yale Center for Computational Vision
  and Control.

\bibitem[\protect\BCAY{Mnih, Kavukcuoglu, Silver, Rusu, Veness, Bellemare,
  Graves, Riedmiller, Fidjeland, Ostrovski, et~al.}{Mnih
  et~al.}{2015}]{mnih15_drl}
Mnih, V., Kavukcuoglu, K., Silver, D., Rusu, A.~A., Veness, J., Bellemare,
  M.~G., Graves, A., Riedmiller, M., Fidjeland, A.~K., Ostrovski, G., et~al.
  \BBOP2015\BBCP.
\newblock \BBOQ Human-level control through deep reinforcement learning\BBCQ\
\newblock {\Bem Nature}, {\Bem 518\/}(7540), 529--533.

\bibitem[\protect\BCAY{Muise, McIlraith,\ \BBA\ Beck}{Muise
  et~al.}{2012}]{Muise2012ImprovedNP}
Muise, C., McIlraith, S.~A., \BBA\ Beck, J. \BBOP2012\BBCP.
\newblock \BBOQ Improved non-deterministic planning by exploiting state
  relevance\BBCQ\
\newblock In {\Bem Proc. International Conference on Automated Planning and
  Scheduling}.

\bibitem[\protect\BCAY{Pineda}{Pineda}{2014}]{mdplib}
Pineda, L. \BBOP2014\BBCP.
\newblock \BBOQ {MDP-L}ib\BBCQ\
\newblock \url{https://github.com/luisenp/mdp-lib}.

\bibitem[\protect\BCAY{Plaku\ \BBA\ Hager}{Plaku\ \BBA\
  Hager}{2010}]{plaku10_sampling}
Plaku, E.\BBACOMMA\  \BBA\ Hager, G.~D. \BBOP2010\BBCP.
\newblock \BBOQ Sampling-based motion and symbolic action planning with
  geometric and differential constraints\BBCQ\
\newblock In {\Bem Proc. International Conference of Robotics and Automation}.

\bibitem[\protect\BCAY{Platt~Jr, Tedrake, Kaelbling,\ \BBA\
  Lozano-Perez}{Platt~Jr et~al.}{2010}]{platt2010belief}
Platt~Jr, R., Tedrake, R., Kaelbling, L., \BBA\ Lozano-Perez, T.
  \BBOP2010\BBCP.
\newblock \BBOQ Belief space planning assuming maximum likelihood
  observations\BBCQ\
\newblock In {\Bem Proc. Robotics: Science and Systems}.

\bibitem[\protect\BCAY{Sacerdoti}{Sacerdoti}{1974}]{sacerdoti1974planning}
Sacerdoti, E.~D. \BBOP1974\BBCP.
\newblock \BBOQ Planning in a hierarchy of abstraction spaces\BBCQ\
\newblock {\Bem Artificial intelligence}, {\Bem 5\/}(2), 115--135.

\bibitem[\protect\BCAY{Sanner}{Sanner}{2010}]{sanner10_rddl}
Sanner, S. \BBOP2010\BBCP.
\newblock \BBOQ Relational dynamic influence diagram language (rddl): Language
  description\BBCQ\
\newblock \url{http://users.cecs.anu.edu.au/~ssanner/IPPC_2011/RDDL.pdf}.

\bibitem[\protect\BCAY{Saribatur, Sch{\"u}ller,\ \BBA\ Eiter}{Saribatur
  et~al.}{2019}]{zeynep18_aspocp}
Saribatur, Z.~G., Sch{\"u}ller, P., \BBA\ Eiter, T. \BBOP2019\BBCP.
\newblock \BBOQ Abstraction for non-ground answer set programs\BBCQ\
\newblock In {\Bem Proc. European Conference on Artificial Intelligence}.

\bibitem[\protect\BCAY{Seipp\ \BBA\ Helmert}{Seipp\ \BBA\
  Helmert}{2013}]{seipp2013counterexample}
Seipp, J.\BBACOMMA\  \BBA\ Helmert, M. \BBOP2013\BBCP.
\newblock \BBOQ Counterexample-guided cartesian abstraction refinement\BBCQ\
\newblock In {\Bem Proc. Autonomous Agents and Multiagent Systems}.

\bibitem[\protect\BCAY{Seipp\ \BBA\ Helmert}{Seipp\ \BBA\
  Helmert}{2018}]{seipp2018counterexample}
Seipp, J.\BBACOMMA\  \BBA\ Helmert, M. \BBOP2018\BBCP.
\newblock \BBOQ Counterexample-guided cartesian abstraction refinement for
  classical planning\BBCQ\
\newblock {\Bem Journal of Artificial Intelligence Research}, {\Bem 62},
  535--577.

\bibitem[\protect\BCAY{Shah, Vasudevan, Kumar, Kamojjhala,\ \BBA\
  Srivastava}{Shah et~al.}{2020}]{shah2020anytime}
Shah, N., Vasudevan, D.~K., Kumar, K., Kamojjhala, P., \BBA\ Srivastava, S.
  \BBOP2020\BBCP.
\newblock \BBOQ Anytime integrated task and motion policies for stochastic
  environments\BBCQ\
\newblock In {\Bem 2020 IEEE International Conference on Robotics and
  Automation (ICRA)}, \BPGS\ 9285--9291. IEEE.

\bibitem[\protect\BCAY{Singh, Jaakkola,\ \BBA\ Jordan}{Singh
  et~al.}{1995}]{singh95_abstractRL}
Singh, S.~P., Jaakkola, T., \BBA\ Jordan, M.~I. \BBOP1995\BBCP.
\newblock \BBOQ Reinforcement learning with soft state aggregation\BBCQ\
\newblock In {\Bem Proc. Neural Information Processing Systems}.

\bibitem[\protect\BCAY{Srivastava, Fang, Riano, Chitnis, Russell,\ \BBA\
  Abbeel}{Srivastava et~al.}{2014}]{srivastava14_tmp}
Srivastava, S., Fang, E., Riano, L., Chitnis, R., Russell, S., \BBA\ Abbeel, P.
  \BBOP2014\BBCP.
\newblock \BBOQ A modular approach to task and motion planning with an
  extensible planner-independent interface layer\BBCQ\
\newblock In {\Bem Proc. International Conference of Robotics and Automation}.

\bibitem[\protect\BCAY{Srivastava, Immerman,\ \BBA\ Zilberstein}{Srivastava
  et~al.}{2008}]{srivastava2008AAAI}
Srivastava, S., Immerman, N., \BBA\ Zilberstein, S. \BBOP2008\BBCP.
\newblock \BBOQ Learning generalized plans using abstract counting\BBCQ\
\newblock In {\Bem Proc. Association for the Advancement of Artificial
  Intelligence}.

\bibitem[\protect\BCAY{Srivastava, Immerman,\ \BBA\ Zilberstein}{Srivastava
  et~al.}{2011}]{srivastava11_aij}
Srivastava, S., Immerman, N., \BBA\ Zilberstein, S. \BBOP2011\BBCP.
\newblock \BBOQ A new representation and associated algorithms for generalized
  planning\BBCQ\
\newblock {\Bem Artificial Intelligence}, {\Bem 175\/}(2), 615--647.

\bibitem[\protect\BCAY{Srivastava, Russell,\ \BBA\ Pinto}{Srivastava
  et~al.}{2016}]{srivastava2016metaphysics}
Srivastava, S., Russell, S.~J., \BBA\ Pinto, A. \BBOP2016\BBCP.
\newblock \BBOQ Metaphysics of planning domain descriptions.\BBCQ\
\newblock In {\Bem Proc. Association for the Advancement of Artificial
  Intelligence}.

\bibitem[\protect\BCAY{Sucan, Moll,\ \BBA\ Kavraki}{Sucan
  et~al.}{2012}]{sucan2012open}
Sucan, I.~A., Moll, M., \BBA\ Kavraki, L.~E. \BBOP2012\BBCP.
\newblock \BBOQ The open motion planning library\BBCQ\
\newblock {\Bem IEEE Robotics \& Automation Magazine}, {\Bem 19\/}(4), 72--82.

\bibitem[\protect\BCAY{Teichteil-Koenigsbuch, Infantes,\ \BBA\
  Kuter}{Teichteil-Koenigsbuch et~al.}{2008}]{teichteil2008rff}
Teichteil-Koenigsbuch, F., Infantes, G., \BBA\ Kuter, U. \BBOP2008\BBCP.
\newblock \BBOQ Rff: A robust, ff-based mdp planning algorithm for generating
  policies with low probability of failure\BBCQ\
\newblock {\Bem Sixth International Planning Competition at ICAPS}, {\Bem 8}.

\bibitem[\protect\BCAY{Wang, Garrett, Kaelbling,\ \BBA\ Lozano-P{\'e}rez}{Wang
  et~al.}{2018}]{wang18_active}
Wang, Z., Garrett, C.~R., Kaelbling, L.~P., \BBA\ Lozano-P{\'e}rez, T.
  \BBOP2018\BBCP.
\newblock \BBOQ Active model learning and diverse action sampling for task and
  motion planning\BBCQ\
\newblock In {\Bem Proc. International Conference on Intelligent Robots and
  Systems}.

\bibitem[\protect\BCAY{Wise, Ferguson, King, Diehr,\ \BBA\ Dymesich}{Wise
  et~al.}{2016}]{wise16_fetch}
Wise, M., Ferguson, M., King, D., Diehr, E., \BBA\ Dymesich, D. \BBOP2016\BBCP.
\newblock \BBOQ Fetch and freight: Standard platforms for service robot
  applications\BBCQ\
\newblock In {\Bem Proc. Workshop on Autonomous Mobile Service Robots}.

\bibitem[\protect\BCAY{Yoon, Fern,\ \BBA\ Givan}{Yoon
  et~al.}{2007}]{determinization}
Yoon, S., Fern, A., \BBA\ Givan, R. \BBOP2007\BBCP.
\newblock \BBOQ {FF}-replan: A baseline for probabilistic planning\BBCQ\
\newblock In {\Bem Proc. International Conference on Automated Planning and
  Scheduling}.

\bibitem[\protect\BCAY{Younes\ \BBA\ Littman}{Younes\ \BBA\
  Littman}{2004}]{younes2004ppddl1}
Younes, H.~L.\BBACOMMA\  \BBA\ Littman, M.~L. \BBOP2004\BBCP.
\newblock \BBOQ {PPDDL 1.} 0: The language for the probabilistic part of
  ipc-4\BBCQ\
\newblock In {\Bem Proc. International Planning Competition}.

\bibitem[\protect\BCAY{Zilberstein\ \BBA\ Russell}{Zilberstein\ \BBA\
  Russell}{1993}]{zilberstein93_anytime}
Zilberstein, S.\BBACOMMA\  \BBA\ Russell, S.~J. \BBOP1993\BBCP.
\newblock \BBOQ Anytime sensing, planning and action: A practical model for
  robot control\BBCQ\
\newblock In {\Bem Proc. Internation Joint Conference on Artificial
  Intelligene}.

\end{thebibliography}
\bibliographystyle{theapa}

z

\end{document}